\documentclass[10pt,twocolumn,letterpaper]{article}

\usepackage[pagenumbers]{cvpr} 

\definecolor{cvprblue}{rgb}{0.21,0.49,0.74}
\usepackage[pagebackref,breaklinks,colorlinks,allcolors=cvprblue]{hyperref}
\usepackage{cite}
\usepackage{amsmath,amssymb,amsfonts, amsthm}
\usepackage{algorithm}
\usepackage{algorithmicx}
\usepackage[noend]{algpseudocode}
\usepackage{graphicx}
\usepackage{textcomp}
\usepackage{bbm}
\usepackage{cases}
\usepackage{multirow}
\usepackage{booktabs}
\usepackage{float}
\usepackage{comment}
\usepackage{pifont}
\usepackage{float}
\usepackage{needspace}

\newcommand{\R}{\mathbbm{R}}

\newcommand{\segu}{S}

\newcommand{\vox}{\mathbf{x}}
\newcommand{\voy}{\mathbf{y}}
\newcommand{\vod}{\mathbf{d}}

\newtheorem{thm}{Theorem}

\newtheorem{lem}{Lemma}
\theoremstyle{definition}

\def\paperID{Anonymous} 
\def\confName{CVPR}
\def\confYear{2026}

\title{D-Convexity: A Unified Differentiable Convex Shape Prior via Quasi-Concavity for Data-driven Image Segmentation}

\author{Shengzhe Chen, Hao Yan\\
School of Computing and Augmented Intelligence, Arizona State University\\
{\tt\small \{schen415, haoyan\}@asu.edu }
}

\begin{document}
\maketitle
\begin{abstract}
Convexity is a fundamental geometric prior that underlies many natural and man-made structures, yet remains challenging to impose effectively in end-to-end trainable segmentation networks. We revisit convexity from a functional perspective and propose a unified, threshold-free convexity prior based on the quasi-concavity of the network’s output mask function $u$. Instead of constraining a single binary segmentation, we require all super-level sets of \(u\) to be convex, transforming global shape constraints into local, differentiable inequalities on \(u\) and its derivatives. From this principle, we derive zero, first, and second-order characterizations, yielding respectively a local midpoint convexification algorithm, a gradient-based condition linked to supporting hyperplanes, and a sufficient second-order inequality expressed as a quadratic form on the tangent plane. The first and second-order formulations produce a compact convolutional loss that can be densely applied across the image without thresholding.  Our quasi-concavity losses integrate seamlessly with modern segmentation networks via the proposed convex gradient projection module (CGPM). They consistently enforce convexity and improve shape regularity across multiple datasets, outperforming networks tailored for retinal segmentation and surpassing previous shape-aware methods. Remarkably, our analysis unifies a wide spectrum of previous convex shape models, from discrete 1-0-1 line constraints and graph-cuts convexity formulations to curvature or signed distance Laplacian based level-set priors, within a single continuous and differentiable framework. Code is available at \tt{https://github.com/ShengzheC/D-Convexity}.
\end{abstract}
    
\section{Introduction}
\label{sec:intro}

\begin{figure*}[htbp]
    \centering
    \includegraphics[width=1\linewidth]{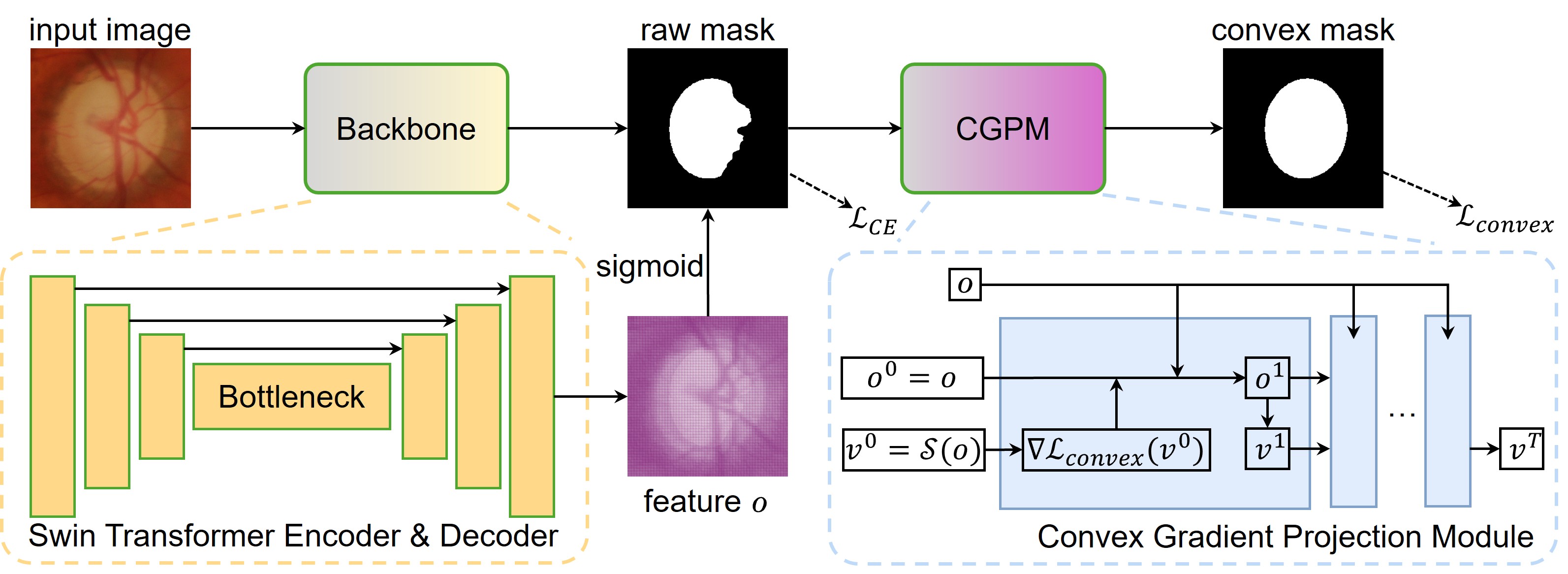}
    \caption{Architecture of the proposed framework.}
    \label{fig:flow}
\end{figure*}

Reliable image segmentation often depends on incorporating structural priors \citep{leventon2002statistical} that exclude implausible shapes, especially when data are noisy, limited, or occluded. Among them, convexity is a particularly powerful prior. Many anatomical and man‑made objects are convex or close to convex \citep{royer2016convexity}, and enforcing convexity helps segment touching instances while suppressing holes and irregular artifacts. Classical formulations demonstrate clear benefits, ranging from discrete energy models with convexity constraints or higher order potentials to continuous level set methods that regularize curvature or signed distance representations, yet these approaches remain difficult to integrate efficiently into modern neural architectures.

Existing convexity priors have several limitations. Discrete formulations \citep{royer2016convexity, gorelick2017convexity} encode convexity via global combinatorial constraints, e.g., 1–0–1 penalties on collinear triples. While effective, they rely on approximate solvers or expensive branch‑and‑cut procedures, and they are not straightforward to differentiate through or tightly coupled with deep networks.
Continuous level‑set methods \citep{yan2020convexity, luo2022convex} convert convexity into PDE constraints, e.g., requiring non‑negative curvature or enforcing the signed distance Laplacian to be non‑negative, yielding elegant theory. Yet such conditions are either necessary but not sufficient, or depend on a particular threshold of the segmentation function, which complicates their use as differentiable losses in probabilistic decoders.
Recent deep models \citep{liu2022deep, zhao2025convex, chen2025contour} incorporate shape priors into the loss, but they still lack explicit control over convexity.

Our idea is to introduce threshold‑free convexity via quasi‑concavity \citep{arrow1961quasi}.
We advocate a simple and functional view: instead of constraining a single binary mask, we constrain the entire segmentation function \(u:\Omega\!\to\![0,1]\). Specifically, we require \(u\) to be \emph{quasi‑concave}, i.e., all of its super‑level sets are convex. This threshold‑free prior turns a hard, set‑valued constraint on the segmented object \( S_{\gamma} = \{\vox\in\Omega | u(\vox)\ge\gamma\}\) into standard inequalities on \(u\) and its derivatives. We show:
(1) a zero‑order characterization that generalizes the classic “line segment stays inside” property to continuous masks,
(2) an equivalent first‑order condition expressed with image‑domain gradients,
and (3) a second‑order sufficient condition for the strict negativity of a quadratic form along the tangent direction of level sets. The second-order form leads to a local, differentiable penalty that can be computed efficiently within modern deep learning frameworks and applied densely without relying on any threshold. This penalty is then integrated into the neural network through a convex gradient projection module (CGPM).

Remarkably, we show that this work connects to and unifies many previous approaches.
Our analysis recovers several existing convex priors~\citep{han2020noise, liu2020convex, luo2022new, luo2023binary, ukwatta2013efficient, yang2017level, luo2019convex, yan2020convexity} as special cases. Certain discrete convexity checks on local windows can be derived from the first‑order condition. Curvature-based level‑set priors for signed distance functions follow from the second‑order condition, and graph-cuts based convexity can be viewed as enforcing zero‑order convexity on all pairs at the pixel graph scale. We make these links explicit and precise, thereby offering a single umbrella that clarifies when and why previous approaches succeed or fail.

Our contributions are summarized as follows:
\begin{itemize}
  \item \textbf{Quasi‑concavity as a unified convex prior.} We formalize the convexity of all super-level sets as the quasi‑concavity of the network output mask function $u$, yielding threshold‑free, image domain constraints suitable for modern deep learning framework. Our framework unifies and clarifies connections to many prior works on convexity-constrained segmentation.
  \item \textbf{Multi‑order characterizations and implementations.} We provide zero, first, and second‑order characterizations corresponding to different levels of spatial smoothness of the mask. We derive a local midpoint‑convexification algorithm for zero‑order enforcement and differentiable first/second‑order penalties that integrate seamlessly with modern segmentation frameworks. Effects of the proposed priors are validated on multiple datasets.
\end{itemize}

\section{Related Work}
\label{sec:formatting}

\noindent\textbf{Convexity in discrete optimization.}
Early shape priors leveraged graph‑cut friendly constraints such as the star shape prior \citep{veksler2008star}, later extended to geodesic star shape \citep{gulshan2010geodesic}, enabling multiple centers and path-aware constraints with global optimality guarantees in interactive settings.
A more general “hedgehog” prior \citep{isack2016hedgehog} constrains surface normals relative to a vector field for multi‑object segmentation, relaxing the rigidity of star‑convexity while retaining tractability.
Convexity proper was encoded via higher‑order clique penalties that forbid 1–0–1 labelings along straight lines \citep{gorelick2014convexity}, or as linear constraints added to multicut/ILP image decomposition, enabling multiple convex instances across classes \citep{royer2016convexity}.
These approaches establish strong baselines, but depend on discrete solvers and are not easy to differentiate through. 

\noindent\textbf{Continuous formulations with curvature and signed distance functions.}
A parallel line of work uses level sets to represent regions and enforces convexity through curvature based inequalities or properties of the signed distance function (SDF). Representative examples include convex segmentation via non‑negative curvature \citep{ukwatta2013efficient, yang2017level}, and explicit SDF constraints such as \(\Vert\nabla\phi\Vert{=}1\) with \(\Delta\phi\!\ge\!0\), which guarantee that sublevel sets are convex \citep{yan2020convexity}.
For multi‑object segmentation, \citep{luo2019convex} shows that a single level-set with a Laplacian sign constraint on a \(c\)-sublevel band can encode multiple disjoint convex objects efficiently and solve it with ADMM.
These continuous priors typically certify convexity at one chosen threshold (e.g., \(\phi{=}0\)) rather than across all output confidence levels. They also do not integrate naturally into deep learning based frameworks. 

\noindent\textbf{Convex priors in deep segmentation.} Recent works bake shape priors directly into neural losses. \citep{liu2020convex} introduces a convex shape block for DCNNs that projects sigmoid activations onto a convex shape set using a soft thresholding dynamics, enabling near binary convex outputs and inclusion constraints.
\citep{han2020noise} proposes a convex shape prior term for pupil segmentation, combining pixel-wise cross‑entropy with a loss that encourages convexity of the predicted region. \citep{pal2024convex} proposes a framework that guarantees the convexity of the segmented output object by leveraging fundamental geometrical insights into the boundaries of convex-shaped objects. In contrast, our quasi‑concavity view provides (1) a threshold‑free convexity prior on the full mask function; (2) equivalent first‑order and sufficient second‑order conditions that lead to simple, differentiable losses; and (3) a theoretical bridge that recovers discrete convexity checks and SDF/curvature constraints as special cases, while being easy to deploy in modern neural networks and scalable training.

\begin{figure}[htbp]
    \centering
    \includegraphics[width=1\linewidth]{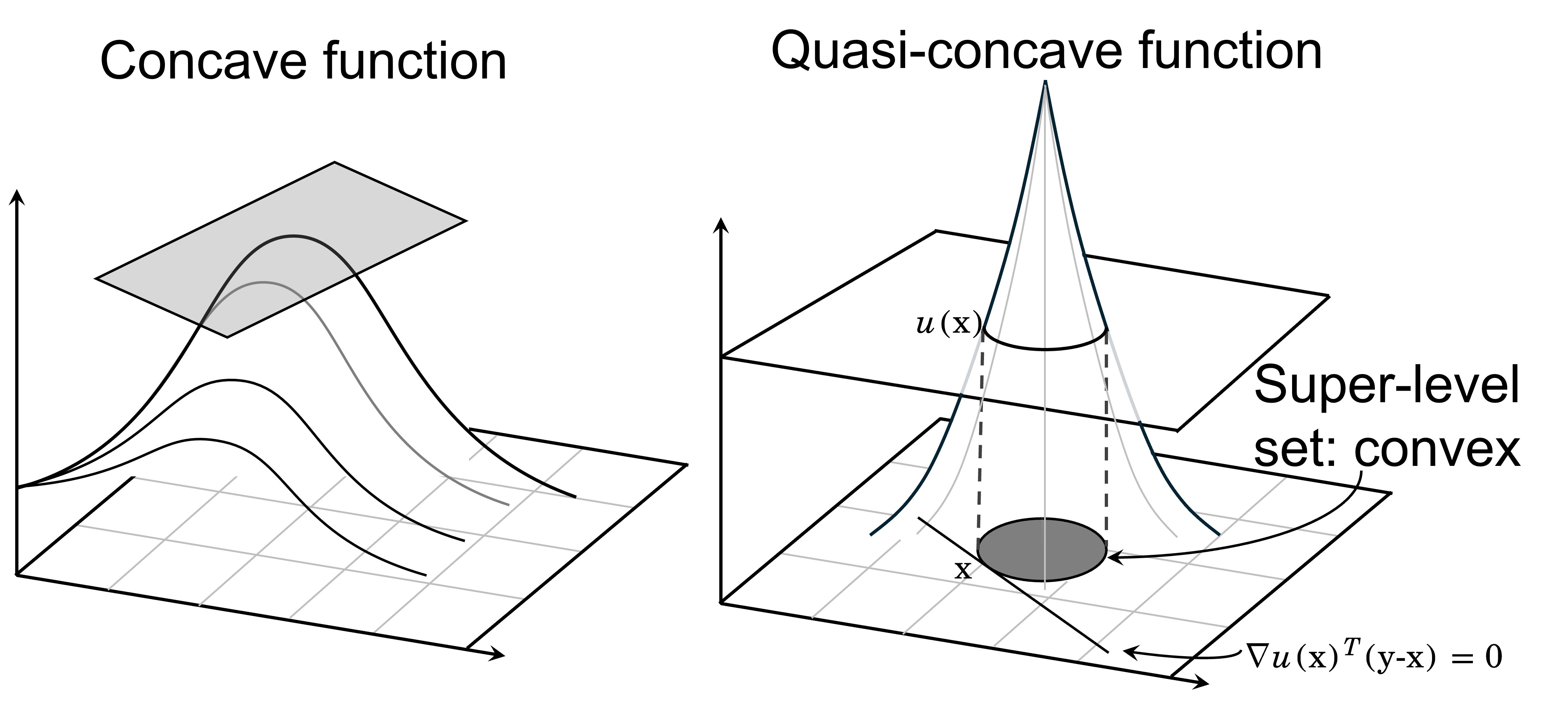}
    \caption{Concave vs. quasi-concave functions. A concave function lies below its tangent plane, whereas a quasi-concave function satisfies a weaker condition while still retaining convex super-level sets.}
    \label{fig:quasi}
\end{figure}
\section{Method and Theory}

\subsection{Data-driven image segmentation}

Let $\Omega \subset\mathbbm{R}^d$ denote the pixel domain and $I$ the input image, which contains a foreground object to be segmented. For clarity, we first consider the binary segmentation problem, and the multi-class case will be discussed later. The ground truth mask is defined by  $g: \Omega\to\{0,1\}$.  The goal is to predict a mask function $u: \Omega \to [0,1]$ using a neural network $\mathcal{N}(I;\Theta)$, 
\begin{equation}
    \left\{\begin{aligned}
          \Theta^* &= \arg\min_{\Theta}(\mathcal{L}_{\text{fid}}(u,g) + \mathcal{L}_{\text{reg}}),  \\
          o &= \mathcal{N}(I;\Theta^*), \\
          u &= \mathcal{S}(o), 
    \end{aligned}\right.
\end{equation}
so that $u(\mathbf x)$ approximates the probability that pixel $\mathbf x\in\Omega$ belongs to the object $\{\mathbf x:g(\mathbf x) = 1\}$. $\mathcal{L}_{\text{fid}}$ is a fidelity loss, e.g. Dice or cross-entropy loss, and $\mathcal{L}_{\text{reg}}$ is a regularization loss term that imposes desired properties on $u$ or $\mathcal{N}$. $o$ is the segmentation feature extracted by a neural network, and $\mathcal{S}$ is a sigmoid or softmax function, to make $u$ a probability map whose values are in $[0,1]$. Consequently, by thresholding  $\gamma\in \R$, we obtain a binarized segmentation and the final \textbf{segmented object} $\segu_\gamma = \{ \vox\in\Omega| u(\vox)\ge \gamma \}$.

\subsection{Convex prior and quasi-concavity}

In many domain-specific tasks, especially in case of high noise levels, little training data, or occlusions, the assumption that the segmented object is convex can significantly improve the segmentation \citep{royer2016convexity}. This suggests constraining $\segu_\gamma$ to be a convex set. 

However, because the segmented object depends on the threshold, we actually require a stronger  convex prior which is threshold-free. To do so, notice $\segu_\gamma$ is a super-level set of $u$. If the  convexity is satisfied regardless of the threshold choice, it means $u$'s super-level sets are always convex, which is the definition of $u$ being \textbf{quasi-concave} \citep{arrow1961quasi}. Equivalently, this stronger version of convexity  yields the following convex prior:
\begin{align}
    \text{$u$ is quasi-concave} \Longleftrightarrow \forall \gamma, \segu_\gamma \text{ is a convex shape.} 
\end{align}

In this way, we convert a hard geometric shape constraint on the segmented shape $\segu_\gamma$ into a functional constraint on the segmentation function $u$. This transformation offers significant advantages. First, it eliminates the need to choose a specific threshold. Second, by considering different levels of spatial smoothness, namely $u\in C^0,C^1,C^2$, we can respectively formulate the zero-order, first-order, and second-order conditions that constrain $u$ to be quasi-concave. Third, directly imposing constraints on the output segmentation function $u$ is key to achieving a differentiable convex shape prior within a deep learning framework.

\subsection{Zero-order condition}

Suppose $u\in C^0$, meaning that it is continuous but not assumed to be differentiable in the image domain. In this case, the following theorem guarantees $u$ to be quasi-concave:

\begin{thm}[Zero-order quasi-concavity condition]
    $u\in C^0$ is quasi-concave $\Longleftrightarrow$ For any $\vox, \voy \in \Omega,$  $\lambda \in [0,1]$, $u(\lambda \vox + (1-\lambda) \voy ) \ge \min\{u(\vox), u(\voy)\}$.

    \begin{proof}[Proof sketch]

Every super-level set is convex means that any line segment joining two points above a given level must also remain above that level, yielding the inequality. Conversely, if the inequality holds for all pairs, 
each super-level set is closed under convex combinations and thus convex.  See the rigorous proof in the Supplementary. 
\end{proof}

    \label{thm:0}
\end{thm}

\subsection{First-order condition}

In practical segmentation scenarios, for example in the eye fundus segmentation, the mask $u$ is typically expected to exhibit some degree of spatial smoothness. In that case, assume $u\in C^1$. We start by the following lemma.

\begin{lem}[Supporting hyperplane given by the gradient]
\label{lem:1}
Let $u:\Omega \to \mathbb{R}$ be a $C^1$ function. Fix $\gamma \in \R$ and consider the super-level set $\segu_\gamma := \{\vox \in \Omega | u(\vox) \ge \gamma\}.$
Assume $\segu_\gamma$ is convex. Let $\voy \in \partial \segu_\gamma$ be a boundary point with $u(\voy)=\gamma$ and suppose $\nabla u(\voy)\neq 0$. Then the affine hyperplane, 
\begin{align}
    T_\voy := \{\vox \in \Omega | \nabla u(\voy)^\top (\vox - \voy) = 0\}
\end{align}
is a supporting hyperplane of $\segu_\gamma$ at $\voy$, i.e. $\segu_\gamma$ is contained in the closed half-space
\begin{align}
    \segu_\gamma\subset H_\voy:= \{\vox\in\Omega | \nabla u(\voy)^\top (\vox - \voy) \ge 0\}.
\end{align}

In particular, $T_\voy$ is also tangent to the contour $\partial \segu_\gamma = \{\vox\in\Omega | u(\vox) = \gamma  \}$, therefore, the normal vector of any supporting hyperplane at $\voy$ can be chosen parallel to $\nabla u(\voy)$.
\end{lem}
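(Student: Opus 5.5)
We argue by contradiction, using convexity of $\segu_\gamma$ together with a first-order Taylor expansion of $u$ at $\voy$. Suppose some $\vox \in \segu_\gamma$ satisfies $\nabla u(\voy)^\top(\vox-\voy) < 0$. Since $\segu_\gamma$ is convex and contains both $\voy$ (because $u(\voy)=\gamma$) and $\vox$, the whole segment $\voy + t(\vox-\voy)$, $t\in[0,1]$, lies in $\segu_\gamma$. (We implicitly need $\Omega$ convex, or at least containing this segment, so that $u$ is defined along it; we assume this, as is natural for an image rectangle.) Set $\varphi(t) := u(\voy + t(\vox-\voy))$. Then $\varphi\in C^1$, $\varphi(0)=\gamma$, and $\varphi(t)\ge\gamma$ for all $t\in[0,1]$. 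Consequently the right derivative satisfies $\varphi'(0) = \lim_{t\downarrow 0}(\varphi(t)-\gamma)/t \ge 0$. By the chain rule, however, $\varphi'(0) = \nabla u(\voy)^\top(\vox-\voy) < 0$, which is a contradiction. Hence $\segu_\gamma \subset H_\voy$. The hyperplane $T_\voy$ passes through $\voy$, and because $\nabla u(\voy)\neq 0$ it is a genuine hyperplane, so it supports $\segu_\gamma$ at $\voy$. This part is routine.

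For the ``in particular'' claim we use $\nabla u(\voy)\neq 0$ and the implicit function theorem. Near $\voy$, the level set $\{u=\gamma\}$ is a $C^1$ hypersurface whose tangent space is $\nabla u(\voy)^\perp$, so $T_\voy$ is tangent to the contour.

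The remaining statement is that every supporting hyperplane at $\voy$ has normal parallel to $\nabla u(\voy)$, and this is where the main care is needed. Let $\mathbf{n}$ be the normal of a supporting hyperplane, chosen so that $\mathbf{n}^\top(\vox-\voy)\ge 0$ on $\segu_\gamma$. For any $\mathbf{v}$ with $\nabla u(\voy)^\top \mathbf{v} > 0$, we have $u(\voy + t\mathbf{v}) > \gamma$ for small $t>0$, so $\voy + t\mathbf{v}\in\segu_\gamma$ and hence $\mathbf{n}^\top\mathbf{v}\ge 0$. By continuity this extends to the closed half-space $\{\mathbf{v} : \nabla u(\voy)^\top\mathbf{v}\ge 0\}$. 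A linear functional that is nonnegative on a closed half-space must be a nonnegative multiple of the defining normal; one can see this by applying it to $\pm\mathbf{w}$ for $\mathbf{w}\perp\nabla u(\voy)$, or via Farkas. Therefore $\mathbf{n} = c\,\nabla u(\voy)$ with $c\ge 0$, and $c>0$ since $\mathbf{n}\ne 0$. This establishes uniqueness of the supporting hyperplane. The delicate point is that $\voy$ should be an interior point of $\Omega$, so that the small perturbations $\voy + t\mathbf{v}$ stay in the domain; we state this as an assumption.
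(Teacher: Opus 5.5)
Your proof is correct. For the main inclusion $S_\gamma\subset H_{\mathbf{y}}$ it is the paper's argument: restrict $u$ to the segment from $\mathbf{y}$ to $\mathbf{x}$, which lies in the convex set $S_\gamma$, and use $\varphi(t)\ge\varphi(0)$ to force $\varphi'(0)=\nabla u(\mathbf{y})^\top(\mathbf{x}-\mathbf{y})\ge 0$. Your contradiction framing is cosmetic, and the tangency via the implicit function theorem is identical.

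The real difference is in the last claim. The paper only asserts that a supporting normal is parallel to $\nabla u(\mathbf{y})$ ``by uniqueness of the tangent.'' That leaves implicit why a supporting hyperplane of the convex set must coincide with the tangent hyperplane of the contour. Your argument proves it. Every $\mathbf{v}$ with $\nabla u(\mathbf{y})^\top\mathbf{v}>0$ gives $\mathbf{y}+t\mathbf{v}\in S_\gamma$ for small $t>0$. Hence $\mathbf{n}^\top\mathbf{v}\ge 0$ on a closed half-space, so $\mathbf{n}=c\,\nabla u(\mathbf{y})$ with $c>0$. This gives uniqueness of the supporting hyperplane outright.

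Your caveat that $\mathbf{y}$ must be an interior point of $\Omega$ is genuinely needed here, and it is only tacit in the paper. Take $u(x,y)=x+y$ on the closed square $[0,1]^2$ and $\gamma\in(1,2)$. The point $(1,\gamma-1)$ is a vertex of the triangle $S_\gamma$, and the line $x=1$ also supports $S_\gamma$ there.

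By contrast, your extra assumption that $\Omega$ contains the segment is redundant. The segment already lies in $S_\gamma\subset\Omega$ by convexity of $S_\gamma$.
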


\begin{proof}[Proof sketch]
Convexity of each super-level set implies that moving from a boundary point into the set cannot decrease $u$. 
Hence the gradient at the boundary must point inward, defining a supporting half-space. 
The tangent plane orthogonal to $\nabla u(\voy)$ therefore supports the set at $\voy$ and coincides with the local tangent of the level contour. See the rigorous proof in the Supplementary. 
\end{proof}

In the case of $u\in C^1$, using its spatial gradient $\nabla u$ in the pixel domain, we can derive an equivalent first-order constraint for $u$ being quasi-concave.

\begin{thm}[First-order quasi-concavity condition]
    $u\in C^1$ is quasi-concave $\Longleftrightarrow$ If $u(\vox) \ge u(\voy)$, then $\nabla u(\voy)^\top (\vox - \voy) \ge 0$.

    \begin{proof}[Proof sketch]
If $u$ is quasi-concave, each super-level set is convex. By Lemma~\ref{lem:1}, its gradient at any boundary point defines an inward normal, yielding the stated inequality.  
Conversely, if the inequality holds, the gradient at every point defines a supporting direction for the local super-level set, preventing any decrease of $u$ along line segments within it.  
Hence all super-level sets are convex and $u$ is quasi-concave. See the rigorous proof in the Supplementary. 
\end{proof}
    \label{thm:1}
\end{thm}

\subsection{Second-order condition}

The most informative characterization arises when we assume $u \in C^2$. 
Although not all segmentation masks satisfy such smoothness, 
this assumption is frequently adopted in prior works. 
In this setting, we examine the following optimization problem associated with a \textit{fixed} point $\vox \in \Omega$, and assume $\nabla u(\vox) \neq 0$:
\begin{equation}
\begin{aligned}
    &\max_{\voy \in \Omega} \; u(\voy), \\
    &\text{s.t. } \nabla u(\vox)^\top (\voy - \vox) \le 0.
\end{aligned}
\label{eq:opt}
\end{equation}

This formulation seeks the point $\voy$ on the non-supporting side of the hyperplane 
$T_\vox$ that maximizes $u(\voy)$. 
If $u$ is quasi-concave, then by definition its super-level set $\segu_{u(\vox)} = \{ \voy \in \Omega \mid u(\voy) \ge u(\vox) \}$ is convex. The point $\vox$ therefore lies on the boundary of $\segu_{u(\vox)}$, and according to Lemma \ref{lem:1}, the hyperplane $T_\vox$ serves as tangent and supporting hyperplane to this set at $\vox$.
Consequently, all feasible points $\voy$ satisfying 
$\nabla u(\vox)^\top (\voy - \vox) \le 0$ lie on the non-supporting side, whereas all $\voy$ such that $u(\voy) \ge u(\vox)$ lie on the supporting side. This implies $\vox$ \textit{maximizes} problem~\eqref{eq:opt}.

Then, we know that the constraint in \eqref{eq:opt} is \emph{binding} (active) at the optimum 
(i.e., $\nabla u(\vox)^\top (\voy - \vox) = 0$), which provides additional insight into the local structure of $u$. Due to binding, the  feasible displacements $\vod$ around $\vox$ must satisfy the tangency constraint $\vod\in T_\vox = \{\, \vod | \nabla u(\vox)^\top \vod = 0 \,\}.$

For $u \in C^2$, the second-order Taylor expansion at $\vox$:
\begin{equation}
    \begin{aligned}
    u(\vox + \vod)
    = u(\vox)
    + \nabla u(\vox)^\top \vod + \frac{1}{2}\vod^\top \nabla^2 u(\vox) \vod
    + o(\|\vod\|^2).
\end{aligned}
\end{equation}

Since $\vod \in T_\vox$, the linear term vanishes, and the variation of $u$ to second order becomes
\begin{align}
    u(\vox + \vod) - u(\vox)
    = \frac{1}{2}\vod^\top \nabla^2 u(\vox) \vod
    + o(\|\vod\|^2).
\end{align}

Hence, a necessary condition for $\vox$ to be a local maximizer of $u$ under the constraint 
$\nabla u(\vox)^\top (\voy - \vox) \le 0$ is that this second-order variation is non-positive 
for all feasible directions $\vod\in T_\vox$, that is,
\begin{equation}
    \vod^\top \nabla^2 u(\vox) \vod \le 0,
    \quad \forall\, \vod \in T_\vox.
    \label{eq:hessian_tangent}
\end{equation}
Equivalently, the Hessian $\nabla^2 u(\vox)$ must be \emph{negative semi-definite on the tangent space} $T_\vox$. Collectively, we get the necessary condition as the following theorem:

\begin{thm}[Second-order quasi-concavity \textbf{necessary} condition]
$u \in C^2$ is quasi-concave $\Longrightarrow$ The Hessian $\nabla^2 u(\vox) \preceq 0$ (negative semi-definite) on tangent space $T_\vox$, for  $\vox\in\Omega$ such that $\nabla u(\vox)\neq 0$.
\end{thm}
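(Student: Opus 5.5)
Fix $\vox\in\Omega$ (interior, so small displacements stay in $\Omega$) with $\nabla u(\vox)\neq 0$, and fix $\vod\neq 0$ with $\nabla u(\vox)^\top\vod=0$. The plan is to show $\vod^\top\nabla^2u(\vox)\vod\le 0$ by contradiction. The tool is the first-order characterization (Theorem~\ref{thm:1}), or equivalently the supporting-hyperplane Lemma~\ref{lem:1} applied to $\segu_{u(\vox)}$, which is convex because $u$ is quasi-concave. Suppose instead that $\vod^\top\nabla^2u(\vox)\vod=2c>0$.

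The first step is to look at $u$ along the tangent line itself. Put $\phi(t)=u(\vox+t\vod)$. Then $\phi(0)=u(\vox)$, $\phi'(0)=\nabla u(\vox)^\top\vod=0$ and $\phi''(0)=2c>0$. By the Taylor expansion already written above, $\phi(t)=u(\vox)+ct^2+o(t^2)>u(\vox)$ for all small $t\neq0$. So both $\vox+t\vod$ and $\vox-t\vod$ lie in $\segu_{u(\vox)}$.

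The second step, which is the main obstacle, turns this strict increase into a violation of the supporting inequality. Simply applying Theorem~\ref{thm:1} with $\voy=\vox$ gives only $\nabla u(\vox)^\top(\pm t\vod)\ge 0$, which holds with equality and yields no contradiction. So the strict positivity must be used to tilt the points slightly to the wrong side of $T_\vox$. Write $\mathbf g=\nabla u(\vox)$ and consider $\mathbf z_t=\vox+t\vod-\epsilon t^2\mathbf g$, with $\epsilon>0$ small, to be fixed below. Expanding,
\begin{equation*}
u(\mathbf z_t)=u(\vox)-\epsilon t^2\|\mathbf g\|^2+ct^2+o(t^2).
\end{equation*}
Choose $\epsilon<c/\|\mathbf g\|^2$. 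Then $u(\mathbf z_t)>u(\vox)$ for small $t\neq0$, so $\mathbf z_t\in\segu_{u(\vox)}$. But $\nabla u(\vox)^\top(\mathbf z_t-\vox)=-\epsilon t^2\|\mathbf g\|^2<0$. This contradicts Theorem~\ref{thm:1} applied to the pair $u(\mathbf z_t)\ge u(\vox)$, and equally Lemma~\ref{lem:1}, since $\vox$ is a boundary point of $\segu_{u(\vox)}$ with $\nabla u(\vox)\neq0$. The only care needed is that the $o(t^2)$ remainder is uniform. This holds because $\|\mathbf z_t-\vox\|=O(t)$, so the Taylor remainder is $o(t^2)$. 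Cross terms such as $t^3\vod^\top\nabla^2u\,\mathbf g$ are $O(t^3)$.

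Hence $\vod^\top\nabla^2u(\vox)\vod\le0$ for every $\vod\in T_\vox$, meaning $\nabla^2u(\vox)$ is negative semi-definite on $T_\vox$. This argument also makes rigorous the heuristic ``$\vox$ maximizes \eqref{eq:opt}'' discussion preceding the statement. The feasible set there is the closed half-space, the tilted points $\mathbf z_t$ are feasible, and they give strictly larger values. If $\vox$ lies on $\partial\Omega$, I would restrict to directions for which the curve stays in $\Omega$, or assume $\Omega$ is open as is implicit in the paper's setup.
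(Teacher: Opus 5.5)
Your proof is correct, but it takes a somewhat different route from the paper.

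\textbf{The paper's route.} The paper argues directly rather than by contradiction. By Lemma~\ref{lem:1}, quasi-concavity makes $\vox$ a maximizer of $u$ over the closed half-space $\{\voy : \nabla u(\vox)^\top(\voy-\vox)\le 0\}$ in \eqref{eq:opt}. Since $\vox+t\vod$ with $\vod\in T_\vox$ is feasible, the second-order Taylor expansion gives $\vod^\top\nabla^2u(\vox)\vod\le 0$ as the standard second-order necessary condition for a constrained maximum.

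\textbf{A gap your tilt fills.} The paper's justification of maximality uses only that every $\voy$ with $u(\voy)\ge u(\vox)$ lies in the closed supporting half-space $H_\vox$. Combined with feasibility, this merely forces such $\voy$ onto $T_\vox$. It does not exclude $u(\voy)>u(\vox)$ there, and $T_\vox$ is exactly the set on which the Taylor argument is then run. Your tilt $\mathbf z_t=\vox+t\vod-\epsilon t^2\mathbf g$ supplies this missing step: it pushes the competitor strictly into the open half-space, where the first-order condition yields a genuine contradiction. So your argument is self-contained where the paper's is a sketch. The same step can also be closed qualitatively: for fixed small $t$, continuity gives $u(\vox+t\vod-\eta\mathbf g)>u(\vox)$ for small $\eta>0$, which avoids the explicit choice $\epsilon<c/\|\mathbf g\|^2$.

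\textbf{What each approach buys.} The paper's framing gives the constrained-optimization picture (binding constraint, Hessian restricted to the tangent space), which motivates the sufficient condition and $\mathcal{L}_{2nd}$. Your version buys rigor at the cost of arguing by contradiction. Both arguments implicitly need $\vox$ to be an interior point of $\Omega$, which you correctly flag.
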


However, this necessary condition cannot be used directly as a constraint on $u$, because it is not sufficient. The following theorem states the sufficient condition in the second-order case.

\begin{thm}[Second-order quasi-concavity \textbf{sufficient} condition]
Let $u \in C^2$. If the Hessian $\nabla^2 u(\vox) \prec 0$ (strict negative definite) on tangent space $T_\vox$ for all $\vox\in\Omega$ such that $\nabla u(\vox)\neq 0$, then $u$ is quasi-concave.

\begin{proof}[Proof sketch]
Along any line segment in a super-level set, define $\phi(t)=u((1-t)\vox_0+t \vox_1)$.  
If $\phi$ were to decrease below its endpoints, it would attain an interior minimum where $\phi'(t)=0$ and $\phi''(t)\ge0$.  
At such a point, the direction of the segment lies in the tangent space, where the assumed negative-definite Hessian forces $\phi''(t)<0$, a contradiction.  
Hence $\phi$ is quasi-concave along every line, so all super-level sets are convex and $u$ is quasi-concave. See the rigorous proof in the Supplementary.
\end{proof}
\end{thm}

For 2D images, the second-order condition has a simple explicit form in terms of the derivatives of $u$, which is convenient for implementation as a constraint. Consider $\vox$ with $\nabla u(\vox) = (u_x, u_y)(\vox) \neq 0$. In this case, its tangent space $T_\vox$ is a 1D line. For all vectors in $T_\vox$, they share a parallel direction $\vod = (-u_y,u_x)(\vox)$. Therefore, the condition that the Hessian $\nabla^2u(\vox)\prec 0$ on the tangent space $T_\vox$ is simply:
\begin{equation}
   \begin{aligned}
    Q_2(\vox) &:= \vod^\top \nabla^2u(\vox) \vod \\ 
   &= (-u_y,u_x)^\top \begin{pmatrix} u_{xx} & u_{xy} \\ u_{yx} & u_{yy} \end{pmatrix} (-u_y,u_x)(\vox) \\
   &=  (u^2_x u_{yy} - 2 u_x u_y u_{xy} + u^2_y u_{xx})(\vox)< 0.
\end{aligned} 
\end{equation}

\begin{thm}[2D case second-order quasi-concavity sufficient condition] Let $\Omega \subset \R^2, u\in C^2$. If for all $\vox\in\Omega$ with $\nabla u(\vox)\neq 0$, $Q_2(\vox) = (u^2_x u_{yy} - 2 u_x u_y u_{xy} + u^2_y u_{xx})(\vox)< 0$, then $u$ is quasi-concave.
\label{thm:3}
\end{thm}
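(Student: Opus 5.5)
This is a reduction to the general second-order sufficient condition stated just above. That theorem requires $\nabla^2 u(\vox) \prec 0$ on $T_\vox$ at every $\vox$ with $\nabla u(\vox)\neq 0$. So it suffices to show that in two dimensions this is exactly the scalar inequality $Q_2(\vox)<0$.

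\textbf{Step 1: the tangent space in 2D.} Fix $\vox$ with $\nabla u(\vox)=(u_x,u_y)\neq 0$. The tangent space $T_\vox=\{\vod : u_x d_1+u_y d_2=0\}$ is one-dimensional. It is spanned by $\vod_0=(-u_y,u_x)$, which is nonzero because $\nabla u(\vox)\neq 0$ and is orthogonal to $\nabla u(\vox)$. Any nonzero $\vod\in T_\vox$ can therefore be written $\vod=t\vod_0$ with $t\neq 0$.

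\textbf{Step 2: reduce definiteness to one sign.} For such $\vod$ we have $\vod^\top\nabla^2u(\vox)\vod=t^2\,\vod_0^\top\nabla^2u(\vox)\vod_0$. Since $t^2>0$, strict negative definiteness on $T_\vox$ is equivalent to $\vod_0^\top\nabla^2u(\vox)\vod_0<0$. Expanding this quadratic form, and using $u_{xy}=u_{yx}$ (valid for $u\in C^2$ by Schwarz), gives $u_x^2u_{yy}-2u_xu_yu_{xy}+u_y^2u_{xx}=Q_2(\vox)$. So the hypothesis $Q_2(\vox)<0$ is exactly $\nabla^2u(\vox)\prec 0$ on $T_\vox$.

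\textbf{Step 3: conclude.} The hypothesis of the general second-order sufficient theorem now holds at every point with nonvanishing gradient, so $u$ is quasi-concave.

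None of these steps is a real obstacle. The substantive work sits in the general sufficient theorem, which is assumed here. The only points needing care are the symmetry of the Hessian, which relies on $u\in C^2$, and the fact that a one-dimensional subspace reduces definiteness to the sign of the form on a single spanning vector.

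If one wished to be self-contained, the $\phi(t)$ line-restriction argument of the general theorem would apply directly. If $\phi(t)=u((1-t)\vox_0+t\vox_1)$ had an interior minimum below $\min\{\phi(0),\phi(1)\}$, then at that point $\phi'=0$. This means the segment direction lies in $T_\vox$, provided $\nabla u\neq 0$ there, and then $\phi''=$ a positive multiple of $Q_2<0$, a contradiction. The case $\nabla u=0$ at the minimizer is the delicate point. I would handle it by the same perturbation argument as in the general theorem, or simply inherit it from that theorem.
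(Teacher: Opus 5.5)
Your Steps 1--2 are correct and are exactly how the paper obtains this theorem. In 2D, $T_\vox$ is spanned by $\vod_0=(-u_y,u_x)$, so $Q_2(\vox)<0$ is precisely strict negative definiteness of $\nabla^2u(\vox)$ on $T_\vox$, and everything rests on the general second-order sufficient theorem. The gap is the case you flagged and then waved through: an interior minimizer $\vox_{t^*}$ of $\phi$ with $\nabla u(\vox_{t^*})=0$. No perturbation argument can handle it, because the statement is false there. Let $\beta(\vox)=\exp\bigl(-1/(1-\|\vox\|^2)\bigr)$ for $\|\vox\|<1$ and $\beta(\vox)=0$ otherwise, and set $u(\vox)=\beta(\vox-\mathbf{p})+\beta(\vox+\mathbf{p})$ with $\mathbf{p}=(2,0)$ on $\Omega=[-4,4]^2$. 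Then $u\in C^\infty$, and $\nabla u\neq 0$ only on the two punctured open unit disks around $\pm\mathbf{p}$. There $u=\psi(r)$ is radial with $\psi'<0$. The Hessian has tangential eigenvalue $\psi'(r)/r$ and $\|\vod_0\|^2=\psi'(r)^2$, so $Q_2=\psi'(r)^3/r<0$. All other points are critical, so the hypothesis holds. Yet $u(\mathbf{0})=0<e^{-1}=\min\{u(\mathbf{p}),u(-\mathbf{p})\}$, so $u$ is not quasi-concave. Along the segment from $-\mathbf{p}$ to $\mathbf{p}$, $\phi$ is minimal on a whole interval of critical points where the Hessian vanishes and the hypothesis says nothing. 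This is just the ``several disjoint convex blobs'' configuration the paper says its prior accommodates.

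So inheriting this case from the general theorem does not help. The paper's supplementary proof of that theorem has exactly the same hole: from $\phi'(t^*)=0$ it concludes $\vox_1-\vox_0\in T_{\vox_{t^*}}$ and applies negative definiteness without checking $\nabla u(\vox_{t^*})\neq 0$. It contains no perturbation step you could borrow. In dimension one the general hypothesis is even vacuous, and $u(x)=x^2$ on $[-1,1]$ refutes it. What is missing is a hypothesis excluding non-maximal critical points, for instance $\nabla u(\vox)\neq 0$ whenever $u(\vox)<\sup_\Omega u$. You also need $\Omega$ convex; this is automatic for an image rectangle, but for non-convex $\Omega$ even a constant $u$ fails. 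Under these assumptions your self-contained argument closes. The interior minimizer has $u(\vox_{t^*})<\min\{\phi(0),\phi(1)\}\le\sup_\Omega u$, so it is a regular point. Then $\phi'(t^*)=0$ forces $\vox_1-\vox_0=s\,\vod_0$ with $s\neq 0$, and $\phi''(t^*)=s^2Q_2(\vox_{t^*})<0$ contradicts $\phi''(t^*)\ge 0$.
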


\begin{figure}
    \centering
    \includegraphics[width=1\linewidth]{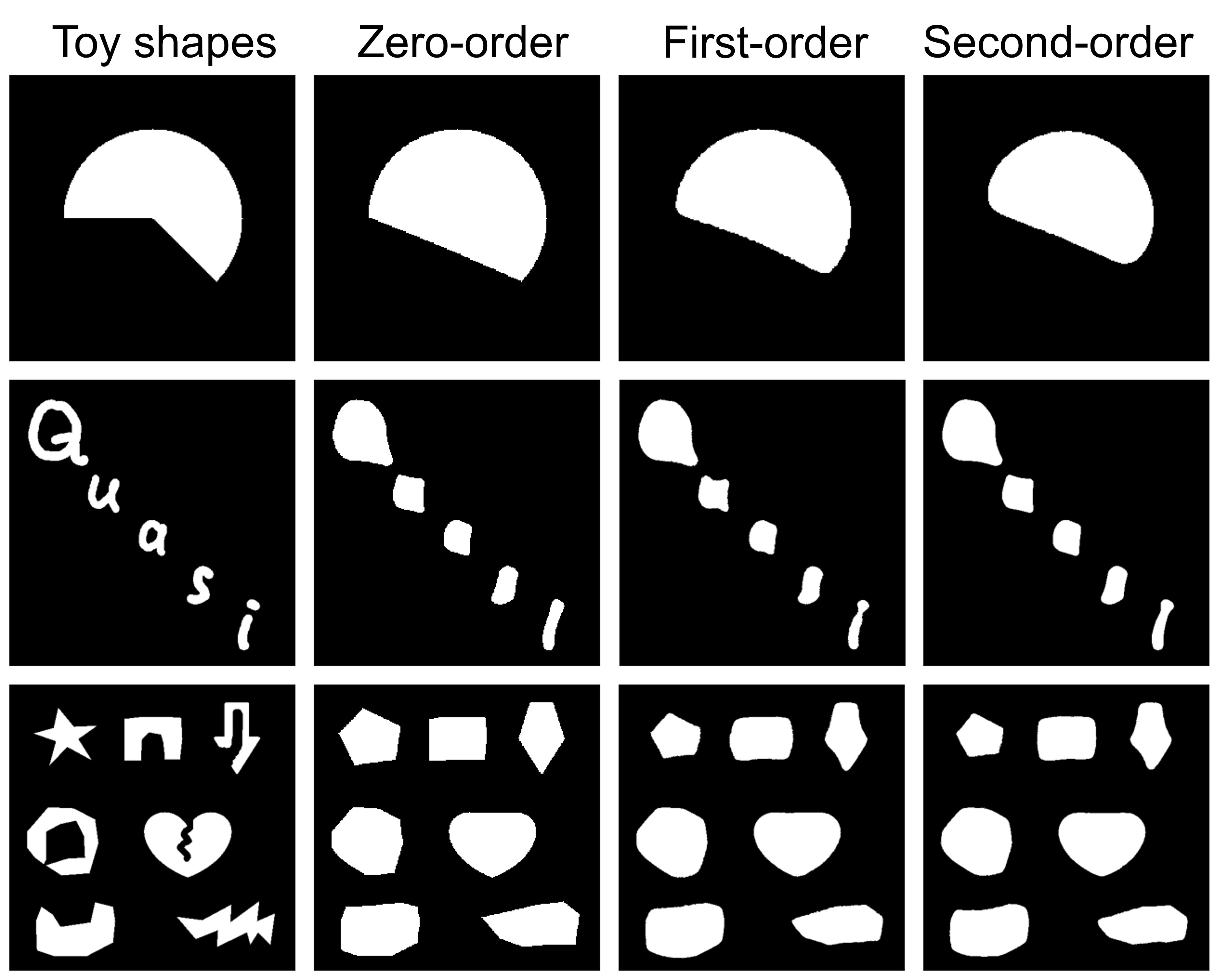}
    \caption{Toy illustration of convexification under the proposed priors. Several non-convex input shapes are regularized using our zero-, first-, and second-order convexity conditions. The resulting convexified shapes exhibit progressively stronger smoothness.}
    \label{fig:toy}
\end{figure}

\subsection{Unifying prior works on shape convexity}

We would like to emphasize that our convex shape prior by quasi-concavity actually unifies many prior works.

\citep{han2020noise} proposes a convex shape prior as ``for every $\vox,\voy$ in the segmentation object, their line segment also within it.'' This is captured by our zero-order condition (Theorem \ref{thm:0}). Given the condition, take $\vox, \voy\in \segu_\gamma$, the segmented object, then pixels on the line segment $\vox_\lambda = (1-\lambda)\vox + \lambda\voy$ should satisfy $u(\vox_\lambda) \ge \min\{ u(\vox), u(\voy) \}\ge\gamma$, which means $\vox_\lambda$ is also in the segmented object $\segu_\gamma$. This shows our zero-order condition is more general, because we actually require the condition over the entire image domain, which allows the prior to be applied to continuous mask functions.

\citep{liu2020convex, luo2022new, luo2023binary} propose a binary convexity characterization for an indicator mask $u:\Omega\to\{0,1\}$:
\begin{align}
    (u - 1)(b_r\ast(2u-1))\ge0, \ \forall r>0,
\end{align}
where $b_r$ is a function whose support is on a radius $r$ disk with $\int_{\Vert\vox\Vert\le r}b_r(\vox)d\vox = 1$. When $u(\vox) = 1$, this condition trivially holds. When $u(\vox) = 0$ (background), this yields $b_r \ast 1 \ge 2(b_r\ast u)$, i.e. $\frac{|B_r\cap \segu|}{|B_r|}\le \frac{1}{2}$, which means at any background pixel, the proportion of the foreground region within the neighborhood ball $B_r$ does not exceed one half. However, this condition can be derived directly from our first-order condition. Given any background pixel $\voy\in\Omega\backslash S$, by Lemma \ref{lem:1} and Theorem \ref{thm:1} we know the segmentation is fully contained in the half-space, $S\subset H_\voy$. Then, we have $B_r(\voy)\cap \segu\subset B_r(\voy)\cap H_\voy$. The latter is exactly a half-disk, which yields $|B_r(\voy)\cap \segu| \le \frac{1}{2} |B_r(\voy)|$.

\citep{ukwatta2013efficient, yang2017level} propose a curvature condition for convex shapes using a level-set function $\phi$ to represent segmented object, and constrain positive curvature $\kappa \ge0$, while \citep{luo2019convex, yan2020convexity} propose to use signed distance function, i.e. impose $\Vert \nabla \phi \Vert =1$ together with  $\kappa= \text{Div}(\frac{\nabla\phi}{\Vert \nabla\phi \Vert}) = \Delta\phi \ge0.$

Importantly, these works define level-set function that takes negative values inside the segmentation region, so the gradient $\nabla\phi$ points \textit{outward} the segmented object, while in our paper $u$ is defined to take higher value inside segmentation, so the gradient $\nabla u$ points \textit{inward} the segmented object. The two conventions differ only by sign. Let $\phi = -u$, the curvature condition:
\begin{equation}
\begin{aligned}
    \kappa(\vox)  = -\frac{u^2_x u_{yy} - 2 u_x u_y u_{xy} + u^2_y u_{xx}}{(u^2_x + u^2_y)^\frac{3}{2}}(\vox) = \frac{-Q_2(\vox)}{\Vert \nabla u(\vox) \Vert^3}.
\end{aligned}
\end{equation}

Since we constrain the quadratic form $Q_2(\vox)$ strictly negative whenever $\nabla u(\vox) \neq 0$, we naturally get $\kappa(\vox) >0$ as the convex condition. The $\kappa \ge0$ in previous works, which is equivalent to $Q_2(\vox) \le 0$, as  mentioned above, is a necessary but not sufficient condition for convexity.

\section{Integration into Neural Networks}

Theorem \ref{thm:0}, \ref{thm:1} and \ref{thm:3} state the conditions under which $u$ is quasi-concave for the cases $u\in C^0,C^1,C^2$ respectively. In this section, we derive practical approaches for integrating these conditions into deep segmentation frameworks. 

Remarkably, the second-order condition is superior to the other two in that it only requires checking all $\vox \in \Omega$ with spatial complexity $\mathcal{O}(|\Omega|)$, while the other two need to check all pairs $\vox,\voy \in \Omega$, making the spatial complexity $\mathcal{O}(|\Omega|^2)$. For implementation, $\mathcal{O}(|\Omega|^2)$ is prohibitively expensive. We therefore restrict the check to an $r$-radius window in the neighborhood of $\voy\in\Omega$, denoted as $N_\voy$, and only check $\vox \in N_\voy$. This reduces spatial complexity to $\mathcal{O}(r^2|\Omega|)$. In practice, since all local checks can be performed in parallel over the entire domain, they are efficient.

To make the proposed conditions computationally feasible, all continuous differential operators such as $u_x$, $u_y$, $u_{xx}$, $u_{xy}$, and $u_{yy}$ are replaced by discrete counterparts, which can be implemented efficiently as fixed convolution kernels corresponding to finite-difference operators~\citep{chen2025contour}.

\noindent\textbf{Zero-order midpoint convexification.}
Quasi-concavity requires \(u(\mathbf{m}) \ge \min\{u(\vox),u(\voy)\}\) for \(\vox,\voy\in\Omega,\mathbf{m}\in[\vox,\voy]\), which is impractical to enforce globally. We adopt a local, discrete midpoint rule: for each pixel \(\voy\) and offset \(\vod\) with \(\|\vod\|\le r\), let \(\mathbf{m}=\voy+\vod\) and \(\mathbf{z}=\voy+2\vod\) (the reflection of \(\voy\) across \(\mathbf{m}\)), and perform
$
u(\mathbf{m}) \leftarrow \max\!\big(u(\mathbf{m}),\, \min\{u(\voy),\,u(\mathbf{z})\}\big)
$
(Algorithm.~\ref{alg:local_convexify}). Iterating this update propagates convexity locally, increases \(u\) monotonically, and converges in finitely many steps. A large \(r\) approaches the convex hull of the foreground while a smaller \(r\) allows multiple objects to be convexified independently.

\begin{algorithm}[htbp]
\caption{Locally Midpoint Convexification}
\label{alg:local_convexify}
\begin{algorithmic}[1]
\Require continuous mask $u\in[0,1]^{H\times W}$, radius $r$, max iterations $T_{\max}$, terminal rate $\varepsilon$
\Ensure convexified mask $u$
\State $\mathcal{O}_r \gets \{\vod=(d_1,d_2)\in\mathbb{Z}^2 \mid 0<\sqrt{d_1^2+d_2^2}\le r\}$ \Comment{offset set within radius}
\State $\texttt{iter} \gets 0$
\While{$\texttt{iter} < T_{\max}$}
  \State $u_{\text{new}} \gets u$
  \ForAll{$\voy\in\Omega$ and $\vod \in \mathcal{O}_r$}
      \State $\mathbf{m} \gets$ $\voy + \vod$, \quad
      $\mathbf{z} \gets$ $\voy + 2\vod$
      \State
             $u_{\text{new}}(\mathbf{m}) \gets 
             \max(u(\mathbf{m}), \min(u(\voy), u(\mathbf{z})))$
  \EndFor
  \If{$\|u_{\text{new}} - u\|_\infty < \varepsilon$}
      \State \textbf{break} \Comment{converged}
  \Else
      \State $u \gets u_{\text{new}}$
  \EndIf
  \State $\texttt{iter} \gets \texttt{iter} + 1$
\EndWhile
\State \Return $u$
\end{algorithmic}
\end{algorithm}

 \noindent\textbf{First-order loss.} According to Theorem \ref{thm:1}, the pixels to be penalized are those satisfying $u(\vox) \ge u (\voy)$, but $\nabla u(\voy)^\top (\vox - \voy) < 0$. Accordingly, an indicator function $\mathbf{1}[u(\vox) \ge u (\voy)]$ should be used to select those $(\vox,\voy)$ pairs. For stable and smooth gradient propagation, we instead use a soft sigmoid function $(1+e^{-x/\varepsilon})^{-1}$ to approximate the hard indicator. This yields a penalty on the negative part of $\nabla u(\voy)^\top (\vox - \voy)$.  Consequently,
\begin{equation}
    \begin{aligned}
    \mathcal{L}_{1st}(u) = \frac{1}{|\Omega|} &\sum_{\vox\in N_\voy}\sum_{\voy\in \Omega} (   \text{Sigmoid}_{\varepsilon}(u(\vox) - u(\voy))\cdot  \\
    &  \text{ReLU}(-\nabla u(\voy)^\top (\vox - \voy)).
\end{aligned}
\end{equation}

 \noindent\textbf{Second-order loss.} By Theorem \ref{thm:3}, the $Q_2(\vox) < 0$ should hold for all $\vox \in \Omega$ such that $\nabla u(\vox) \neq 0$. If $\nabla u(\vox) = 0$, the primal optimization problem (\ref{eq:opt}) degenerates, and constraining the Hessian is no longer meaningful. Therefore, we use a $\Vert \nabla u(\vox) \Vert$ term to control this effect, i.e. when $\Vert \nabla u(\vox) \Vert =0$, no penalty is applied at that $\vox$. For $Q_2(\vox)$ being strictly negative, we introduce a margin $\delta>0$, and penalize the positive part of $Q_2(\vox) + \delta$. Collectively,
\begin{equation}
    \mathcal{L}_{2nd}(u) =\frac{1}{|\Omega|}\sum_{\vox\in \Omega} \Vert \nabla u(\vox) \Vert \cdot \text{ReLU}(Q_2(\vox) + \delta).
\end{equation}

\noindent\textbf{Convex gradient projection module.} Using the loss alone may not be sufficient to enforce a hard convexity constraint at inference time. To address this, we propose a convex gradient projection module (CGPM). After the network outputs the raw feature $o$, and raw mask function $u = \text{Sigmoid}(o)$, we solve the following optimization problem to find a proximal convexified mask function $u_p$:
\begin{align}
    u_p \in \arg\min_{v\in[0,1]} \frac{1}{2} \Vert v-u \Vert^2 + \lambda \cdot \mathcal{L}_{convex}(v),
\end{align}
where $\mathcal{L}_{convex}$ is one of $\mathcal{L}_{1st}, \mathcal{L}_{2nd}$. The problem can be solved by gradient descent, because $\nabla_v\mathcal{L}_{convex}(v)$ is available either through automatic differentiation or explicit derivation. To guarantee $v\in[0,1]$, the optimization is performed on the logit space $o$. CGPM can be viewed as an unrolled optimization module to solve the above problem. The detailed process is summarized in Algorithm~\ref{alg:cgpm}. Figure~\ref{fig:toy} shows convexification results on several toy shapes using the proposed priors. For the first and second conditions, we use a signed distance function passed through a sigmoid as the initial mask function, and simulate CGPM for several steps to minimize the proposed losses. The results show reliable convexification across all tested shapes.

\begin{algorithm}[htbp]
\caption{Convex Gradient Projection Module (CGPM)}
\label{alg:cgpm}
\begin{algorithmic}[1]
\Require Convex prior $\mathcal{L}_{convex}$, raw logits $o$, max iterations $T_{\max}$, learning rate $\eta$, regularizer coefficient $\lambda$
\Ensure Convexified mask $u_p$
\State $t \gets 0 \ , \ o^t \gets o, \  v^t \gets \text{Sigmoid}(o^t)$
\While{$t < T_{\max}$} 
  \State $o^{t} \gets o^{t} - \eta ((o^t - o) + \lambda \cdot \nabla \mathcal{L}_{convex}(v^t))$
  \State $v^{t} \gets \text{Sigmoid}(o^{t})$
  \State $t \gets t + 1$
\EndWhile
\State \Return $u_p = v^T$
\end{algorithmic}
\end{algorithm}

\noindent\textbf{Dealing with multi-class.}
With a proper choice of window size $r$, our convexity prior naturally supports multiple disjoint convex regions within a single foreground class, similar to \citep{luo2019convex},  
as shown in Figure~\ref{fig:toy}.
For $K$-class multi-label segmentation, we train $K$ independent binary maps $u_k:\Omega\!\to\![0,1]$, each with cross-entropy loss and optional convex prior. In softmax style single-label segmentation, let $u_0,\dots,u_{K-1}$ be the logits.  
The region of class $m$ is $\{u_m-\max_{i\ne m}u_i\ge0\}$, which is convex if $u_m-\max_{i\ne m} u_i$ satisfies the proposed quasi-concavity constraint.

\begin{table*}[htbp]
  \centering
  \caption{Performance of baseline and shape-aware methods on ACDC, CASIA, REFUGE and RIM-ONE-r3 datasets. Notably, models trained on REFUGE dataset are directly evaluated on RIM-ONE-r3 to assess generalization.}
  \label{tab:acdc-pupil-refuge}
  \resizebox{1\textwidth}{!}{
  \begin{tabular}{l *{12}{c}}
    \toprule
    \multirow{2}{*}{\textbf{Method}} &
    \multicolumn{3}{c}{\textbf{ACDC Dataset}} &
    \multicolumn{3}{c}{\textbf{CASIA Dataset}} &
    \multicolumn{3}{c}{\textbf{REFUGE Dataset}} &
    \multicolumn{3}{c}{\textbf{RIM-ONE-r3 Dataset}} \\
    \cmidrule(lr){2-4}\cmidrule(lr){5-7}\cmidrule(lr){8-10} \cmidrule(lr){11-13}
    & Dice $\uparrow$ & IoU $\uparrow$ & HD $\downarrow$ &
      Dice $\uparrow$ & IoU $\uparrow$ & HD $\downarrow$ &
      Dice $\uparrow$ & IoU $\uparrow$ & HD $\downarrow$ & Dice $\uparrow$ & IoU $\uparrow$ & HD $\downarrow$ \\
    \midrule
     U-Net~\citep{ronneberger2015u}  & 89.52 & 81.02 & 28.04 & 94.65 & 89.84 & 2.549 & 84.66 & 73.71 & 11.07 & 76.48 & 61.92 & 20.57 \\
     Swin-Unet~\citep{cao2022swin}  & 95.42 & 91.23 & 4.965 & 94.76 & 90.05 & 2.399 & 84.00 & 72.42 & 7.863 & 81.00 & 68.07 & 15.32 \\
    Dcan~\citep{chen2016dcan}  & 93.38 & 87.59 & 6.946 & 94.90 & 90.29 & 2.413 & 80.66 & 67.59 & 9.379 & 76.23 & 61.59 & 16.53 \\
    Dmtn~\citep{tan2018deep} & 92.60 & 86.22 & 8.500 & 94.92 & 90.34 & 2.337 & 82.36 & 70.01 & 9.337 & 78.39 & 64.46 & 16.80 \\
    ConvMCD~\citep{murugesan2019conv} & 93.44 & 87.68 & 15.53 & \textbf{95.03} & \textbf{90.54} & 2.323 & 78.38 & 64.45 & 12.51 & 76.71 & 62.22 & 18.18\\
    Active Boundary~\citep{wang2022active} & 90.93 & 81.38 & 24.71 & 94.49 & 89.55 & 2.656 & 84.82 & 73.63 & 10.59 & 75.37 & 60.48 & 20.64 \\
    \textbf{Proposed}    & \textbf{95.46}  & \textbf{91.31} & \textbf{4.702} & 94.71 & 89.94 & \textbf{2.288}  & \textbf{88.61} & \textbf{79.54} & \textbf{5.859} & \textbf{83.09} & \textbf{71.08} & \textbf{12.59} \\
    \bottomrule
  \end{tabular}
  }
\end{table*}

\section{Experiments}
\subsection{Experimental Settings}
\hspace*{1.2em}\textbf{Dataset.}
To evaluate the effect of the proposed convexity prior, we selected datasets that contain approximately convex anatomical structures. Specifically, we use two retinal fundus datasets, REFUGE~\citep{ORLANDO2020101570} and RIM-ONE-r3~\citep{rimone}, where the optic disc and cup regions exhibit convex or near-convex shapes. REFUGE provides a large set of labeled optic disc and cup images acquired under clinical conditions, with 400 training, 400 validation, and 400 test images. RIM-ONE-r3 consists of multi-expert annotations and 74 high-quality glaucomatous and 85 normal images. Following~\citep{Man16}, we use 99 images for training and 60 for testing. In addition to these retinal datasets, we also include ACDC~\citep{bernard2018deep} cardiac MRI slices in 2D short-axis views, which present convex ventricular structures in each slice, resulting in 100 training and 250 test images. We also evaluated on CASIA iris and pupil segmentation dataset annotated by~\citep{wang2020}, using 100 images for training and 99 for testing, and focusing on pupil segmentation.

\textbf{Pre-processing.}
For the retinal datasets, following common practice in retinal image segmentation, we first locate the optic disc center and crop a 480×480 region of interest around it to reduce background interference and emphasize the relevant anatomical structures. For general datasets, all images and masks are resized to a uniform resolution and normalized to [0,1]. Standard data augmentation is used during training, including random horizontal and vertical flips, rotations, scaling, Gaussian noise, Gaussian smoothing, and mild brightness and contrast perturbations. 

\textbf{Implementation Details.} Swin-Unet~\citep{cao2022swin} is used as the encoder-decoder backbone, trained on input images resized to 224x224. CGPM with $\mathcal{L}_{2nd}$ convex condition is then applied to the masks to enforce convexity, and the whole framework is trained with cross-entropy loss. We set parameters $\eta =1e-2, \lambda = 1, T_{\max}=100$. For training, we use AdamW optimizer and OneCycle scheduler for 200 epochs with batch size of 6 and maximum learning rate of $1e-4$. At inference, CGPM increases the per-image runtime of Swin-Unet from 0.01 seconds to 0.12 seconds.

\textbf{Evaluation Metrics.}
We evaluate segmentation performance using three standard metrics: Dice score, Intersection over Union (IoU), and Hausdorff distance (HD). Dice and IoU measure the overlap between the predicted and ground truth masks, reflecting region-level accuracy. Notably, instead of averaging per-image Dice and IoU, we compute these metrics by accumulating \textit{TP}, \textit{FP}, \textit{FN} over all images in each class. Hausdorff distance assesses the maximum boundary deviation, providing a complementary measure of contour precision and geometric consistency.

\subsection{Comparison with SOTAs}

We first compare our convexity regularized method  with several SOTA deep networks on the retinal dataset RIM-ONE-r3. Incorporating the knowledge that the optic cup is nested inside the optic disc, we train two continuous segmentation masks (disc and cup) and pass them through our CGPM to enforce convexity. The competing methods include recent retinal segmentation models specifically tailored to optic disc and cup delineation. Quantitative results in Table~\ref{tab:sota} show that our proposed method achieves higher Dice and IoU scores.

\begin{table}[htbp]
  \centering
  \caption{Comparison performance with SOTAs on RIM-ONE-r3.}
  \label{tab:sota}
  \resizebox{\columnwidth}{!}{
  \begin{tabular}{c c c c c c}
    \toprule
    \multirow{2}{*}{\textbf{Method}} & 
    \multirow{2}{*}{\textbf{Year}} &  
    \multicolumn{2}{c}{\textbf{Optic Disc}} & 
    \multicolumn{2}{c}{\textbf{Optic Cup}} \\
    \cmidrule(lr){3-4} \cmidrule(lr){5-6}
     &  & {Dice} & {IoU} & {Dice} & {IoU} \\
    \midrule
    U-Net~\citep{ronneberger2015u} & \textit{baseline} & 94.90 & 90.40 & 82.68  & 71.33 \\
    M-Net~\citep{fu2018joint} & \textit{2018} & 95.26 & 91.14 & 84.48  & 73.00\\
    CE-Net~\citep{gu2019net} & \textit{2018} & 95.27 & 91.15 & 84.35  & 74.24 \\
    Xu \textit{et al.}~\citep{xu2019mixed} & \textit{2019} & 95.61 & 91.72 & 85.64  & 75.87 \\
    CDED-Net~\citep{tabassum2020cded} & \textit{2020} & 95.82 & 91.01 & 86.22  & 75.32 \\
    GDCSeg-Net~\citep{zhu2021gdcseg} & \textit{2021} & 95.60 & 91.67 & 82.37  & 72.37 \\
    Liu \textit{et al.}~\citep{liu2022end} & \textit{2022} & 96.50 & 93.28 & 86.23  & 76.66 \\
    Swin-Unet~\citep{cao2022swin} & \textit{2022} & 96.77 & 93.75 & 86.57 & 76.37 \\
    Wang \textit{et al.}~\citep{wang2023towards} & \textit{2023} & 95.58 & 88.03 & 86.42  & 76.19 \\
    ODCS-NSNP~\citep{li2025odcs} & \textit{2025} & 96.63 & 93.48 & 85.41  & 75.04 \\
    \textbf{Proposed} & \textit{2026} & \textbf{97.15} & \textbf{94.46} & \textbf{88.01}  & \textbf{78.59} \\
    \bottomrule
  \end{tabular}
  }
\end{table}

\begin{figure*}[htbp]
    \centering
    \includegraphics[width=1\linewidth]{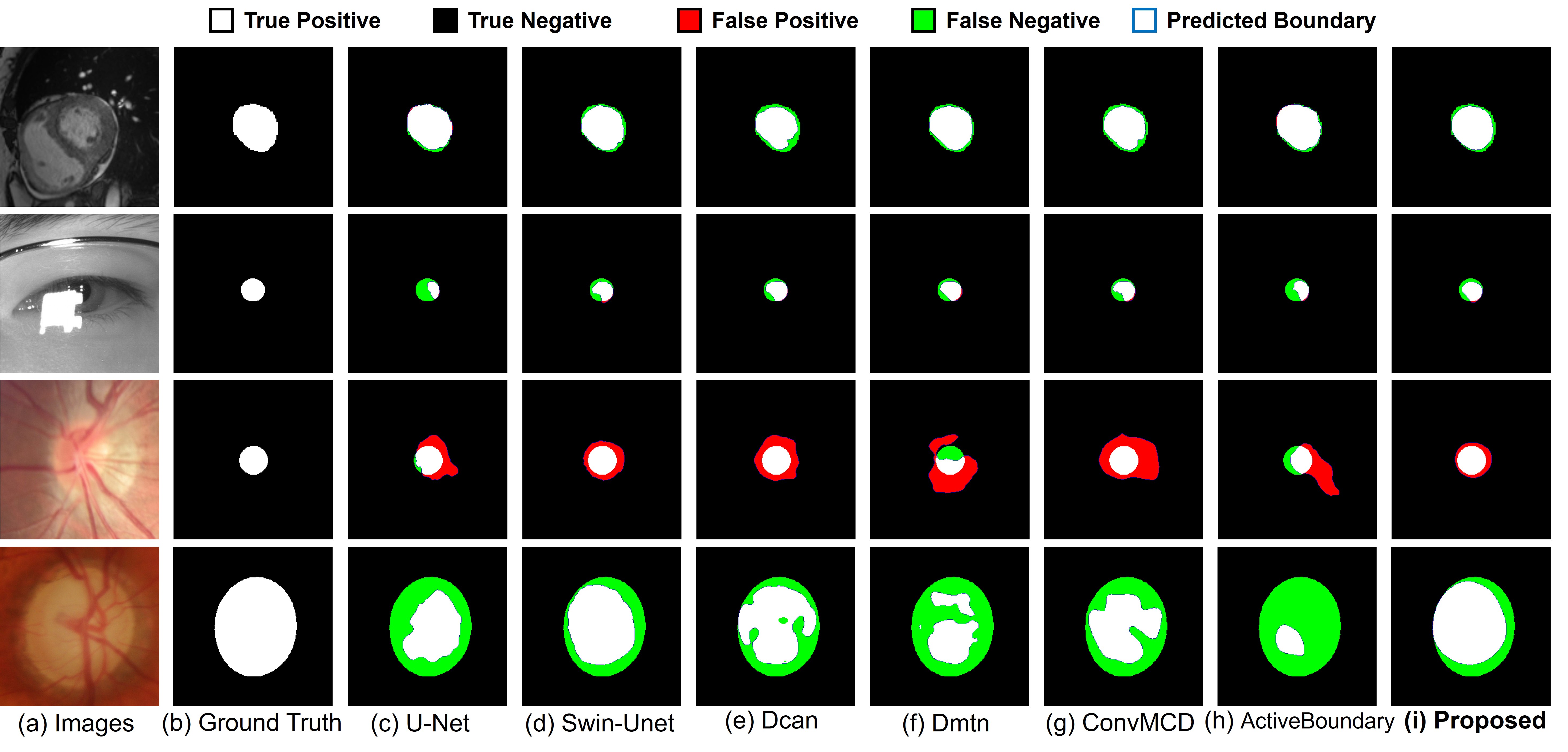}
    \caption{Qualitative visualization results comparison with shape-aware methods.}
    \label{fig:main}
\end{figure*}

\subsection{Comparison with Shape-Aware Methods}

We further compare our method with existing shape-aware segmentation approaches, including methods that use distance functions and boundary information to produce better-shaped segmentations. All methods are evaluated under consistent settings for fair comparison. For the two retinal datasets, we focus on optic cup segmentation. Models trained on REFUGE are directly applied to RIM-ONE-r3 to measure cross-dataset robustness. See the summarized quantitative results in Table~\ref{tab:acdc-pupil-refuge} and qualitative visualization in Figure~\ref{fig:main}. Overall, our convex prior achieves higher Dice and IoU, lower Hausdorff distance, and more regular boundaries, demonstrating its effectiveness as a generic and stable convex shape constraint across modalities.

\subsection{Ablation Study}

Throughout, we use Swin-Unet as the backbone and apply the second-order convex condition within CGPM. To evaluate design choices comprehensively, we perform ablation studies on individual components. For each setting, we run 10 paired trials on REFUGE and apply a t-test to assess the improvement in each metric and the corresponding p-values. Detailed results are available in the Supplementary.

\noindent\textbf{Effect of each order condition.}
We first examine the contribution of each order condition. The zero-order algorithm is used to convexify raw outputs, while the first and second conditions are applied within CGPM.  As summarized in Table~\ref{tab:ablation-backbone-delta-compact}, the second-order condition yields the best overall performance, achieving the largest gains in Dice and IoU while reducing the Hausdorff distance. 

\noindent\textbf{Backbone adaptability.}
To verify general applicability, we integrate the proposed convex prior into several baseline architectures, including U-Net~\citep{ronneberger2015u}, DeepLabV3p~\citep{chen2018encoder}, and Swin-Unet~\citep{cao2022swin}. The proposed prior consistently improves the boundary metrics (HD) of all models.  The first and second-order conditions significantly improve all metrics while enhancing shape convexity.

\begin{table}[htbp]
\centering
\caption{Ablation study on backbone choice and convexity conditions. Entries show improvement and $p$-values using t-test.}
\label{tab:ablation-backbone-delta-compact}
\resizebox{\columnwidth}{!}{
\begin{tabular}{c c cc cc cc}
\toprule
\multirow{2}{*}{\textbf{Backbone}} &
\multirow{2}{*}{\textbf{Prior}} & \multicolumn{2}{c}{\textbf{Dice} $\uparrow$} & \multicolumn{2}{c}{\textbf{IoU} $\uparrow$} & \multicolumn{2}{c}{\textbf{HD} $\downarrow$} \\ \cmidrule(lr){3-4}\cmidrule(lr){5-6}\cmidrule(lr){7-8}
 & & $\Delta$ & $p$ & $\Delta$ & $p$ & $\Delta$ & $p$ \\
\midrule
\multirow{3}{*}{U-Net~\citep{ronneberger2015u}} & $0th$  & $-$0.156     & .001     & $-$0.232   & $<$.001 & $-$0.098 & $<$.001 \\
& $1st$ & $+$0.429 & $<$.001  & 
$+$0.644  & $<$.001 & $-$0.832 & $<$.001 \\
      & $2nd$  & $+$1.510 & $<$.001  & $+$2.280 & $<$.001 & $-$2.240 & $<$.001 \\
\midrule
\multirow{3}{*}{Swin-Unet~\citep{cao2022swin}} & $0th$   & $-$0.185     & .005     & $-$0.284     & .004 & $-$0.056 &  .006 \\
& $1st$ &  $+$0.659 & $<$.001  & $+$1.013  & $<$.001 & $-$0.787 & .002 \\
      & $2nd$  & $+$2.375 & $<$.001  & $+$3.711  & $<$.001 & $-$1.630 & $<$.001 \\
\midrule
\multirow{3}{*}{DeepLab~\citep{chen2018encoder}} & $0th$  & $-$0.022     & .003     & $-$0.033     & .002 & $-$0.006 &  .003 \\
& $1st$ & $+$2.057 & $<$.001  & $+$2.772  & $<$.001 & $-$1.133 & $<$.001 \\
      & $2nd$  & $+$6.569 & $<$.001  & $+$9.202  & $<$.001 & $-$3.324 & $<$.001 \\
\bottomrule
\end{tabular}
}
\end{table}

\vspace{-1em}

\section{Conclusion}
We introduce a convex shape prior by enforcing quasi-concavity on the mask function, yielding a framework that explains and unifies many previous works. Our differentiable priors operate directly on continuous masks and integrates seamlessly into deep learning frameworks via the proposed convex gradient projection module (CGPM). Experiments and ablation studies show consistent metric gains while preserving shape convexity and regularity.
{
    \small
    \bibliographystyle{ieeenat_fullname}
    \bibliography{main}
}
\clearpage
\setcounter{page}{1}
\maketitlesupplementary

\appendix
\section{Proof of Theorem~\ref{thm:0}}
\textbf{Zero-order quasi-concavity condition:}
    $u\in C^0$ is quasi-concave $\Longleftrightarrow$ For any $\vox, \voy \in \Omega,$  $\lambda \in [0,1]$, $u(\lambda \vox + (1-\lambda) \voy ) \ge \min\{u(\vox), u(\voy)\}$.

    \begin{proof}
        ($\Rightarrow$)  Denote $\gamma = \min\{u(\vox), u(\voy)\}$. Apparently $u(\vox), u(\voy)\ge \gamma$, which means $\vox,\voy\in\segu_\gamma$. Given that $u$ is quasi-concave, its super-lever set $\segu_\gamma$ is convex.  Therefore, any convex combination of $\vox,\voy$, i.e. $\lambda \vox + (1-\lambda) \voy \in \segu_\gamma$. This means $u(\lambda \vox + (1-\lambda) \voy ) \ge \gamma = \min\{u(\vox), u(\voy)\}.$
        
        \medskip
        ($\Leftarrow$) For any $\gamma$, consider $\vox, \voy \in \segu_\gamma$. By definition, $u(\vox), u(\voy) \ge \gamma$. Therefore, $u(\lambda \vox + (1-\lambda) \voy) \ge \min\{u(\vox), u(\voy)\} \ge \gamma$. This means the convex combination $\lambda \vox + (1-\lambda) \voy \in \segu_\gamma$, i.e. $u$'s super-level set is convex, and $u$ is quasi-concave.
    \end{proof}

\section{Proof of Lemma~\ref{lem:1}}

\textbf{Supporting hyperplane given by the gradient:}
Let $u:\Omega \to \mathbb{R}$ be a $C^1$ function. Fix $\gamma \in \R$ and consider the super-level set $\segu_\gamma := \{\vox \in \Omega | u(\vox) \ge \gamma\}.$
Assume $\segu_\gamma$ is convex. Let $\voy \in \partial \segu_\gamma$ be a boundary point with $u(\voy)=\gamma$ and suppose $\nabla u(\voy)\neq 0$. Then the affine hyperplane, 
\begin{align}
    T_\voy := \{\vox \in \Omega | \nabla u(\voy)^\top (\vox - \voy) = 0\}
\end{align}
is a supporting hyperplane of $\segu_\gamma$ at $\voy$, i.e. $\segu_\gamma$ is contained in the closed half-space
\begin{align}
    \segu_\gamma\subset H_\voy:= \{\vox\in\Omega | \nabla u(\voy)^\top (\vox - \voy) \ge 0\}.
\end{align}

In particular, $T_\voy$ is also tangent to the contour $\partial \segu_\gamma = \{\vox\in\Omega | u(\vox) = \gamma  \}$, therefore the normal vector of any supporting hyperplane at $\voy$ can be chosen parallel to $\nabla u(\voy)$.

\begin{proof}
Since $u \in C^1$, the first-order Taylor expansion at $\voy$ gives, for $\vox$ near $\voy$,
\begin{align}
    \nu(\vox) := u(\vox) - u(\voy) = \nabla u(\voy)^\top (\vox - \voy) + r(\vox), \\ \qquad \text{where } \frac{r(\vox)}{\|\vox - \voy\|} \to 0 \text{ as } \vox \to \voy.
\end{align}

Take any $\vox \in \segu_\gamma$ (so $u(\vox) \ge u(\voy)$). Because $\segu_\gamma$ is convex and $\voy \in \partial \segu_\gamma$, the line segment
\begin{align}
    \ell(t) := \voy + t(\vox - \voy), \ t \in [0,1],
\end{align}

lies in $\segu_\gamma$. Define the scalar function $\phi(t) := u(\ell(t))$. Note that $\phi(0) = u(\voy)$ and $\phi(1) = u(\vox) \ge u(\voy)$, hence $\phi(t) \ge u(\voy)$ for all $t \in [0,1]$ because each point of the segment belongs to $\segu_\gamma$. Differentiating $\phi$ at $t = 0$ yields
\begin{align}
    \phi'(0) = \nabla u(\voy)^\top (\vox - \voy).
\end{align}

Since $\phi(t) \ge \phi(0)$ for small positive $t$, we must have $\phi'(0) \ge 0$, i.e.
\begin{align}
    \nabla u(\voy)^\top (\vox - \voy) \ge 0.
\end{align}

This inequality exactly says that the affine hyperplane $T_\voy$ is supporting, since every $\vox \in \segu_\gamma$ lies in the closed half-space determined by $T_\voy$.

Since $\nabla u(\voy)\neq 0$, the Implicit Function Theorem implies that the contour $\{\vox\in\Omega | u(\vox) = \gamma \}$ is a $C^1$ hypersurface in a neighborhood of $\voy$. Its
tangent space at $\voy$ is the kernel of $\nabla u(\voy)$, i.e.
\begin{equation}
    \begin{aligned}
    T_\voy &= \ker(\nabla u(\voy)) =\{\vod|\;\nabla u(\voy)^\top \vod = 0\} \\ 
    &=  \{\vox \in \Omega | \nabla u(\voy)^\top (\vox - \voy) = 0\}.
\end{aligned}
\end{equation}

If a supporting hyperplane at $\voy$ has normal $\mathbf{n}$, then by uniqueness of the tangent (since $u$ is differentiable and $\nabla u(\voy)\neq0$) we have $\mathbf{n}$ parallel to $\nabla u(\voy)$, which proves the final claim.
\end{proof}

\section{Proof of Theorem~\ref{thm:1}}

\begin{figure}[h]
    \centering
    \includegraphics[width=0.7\linewidth]{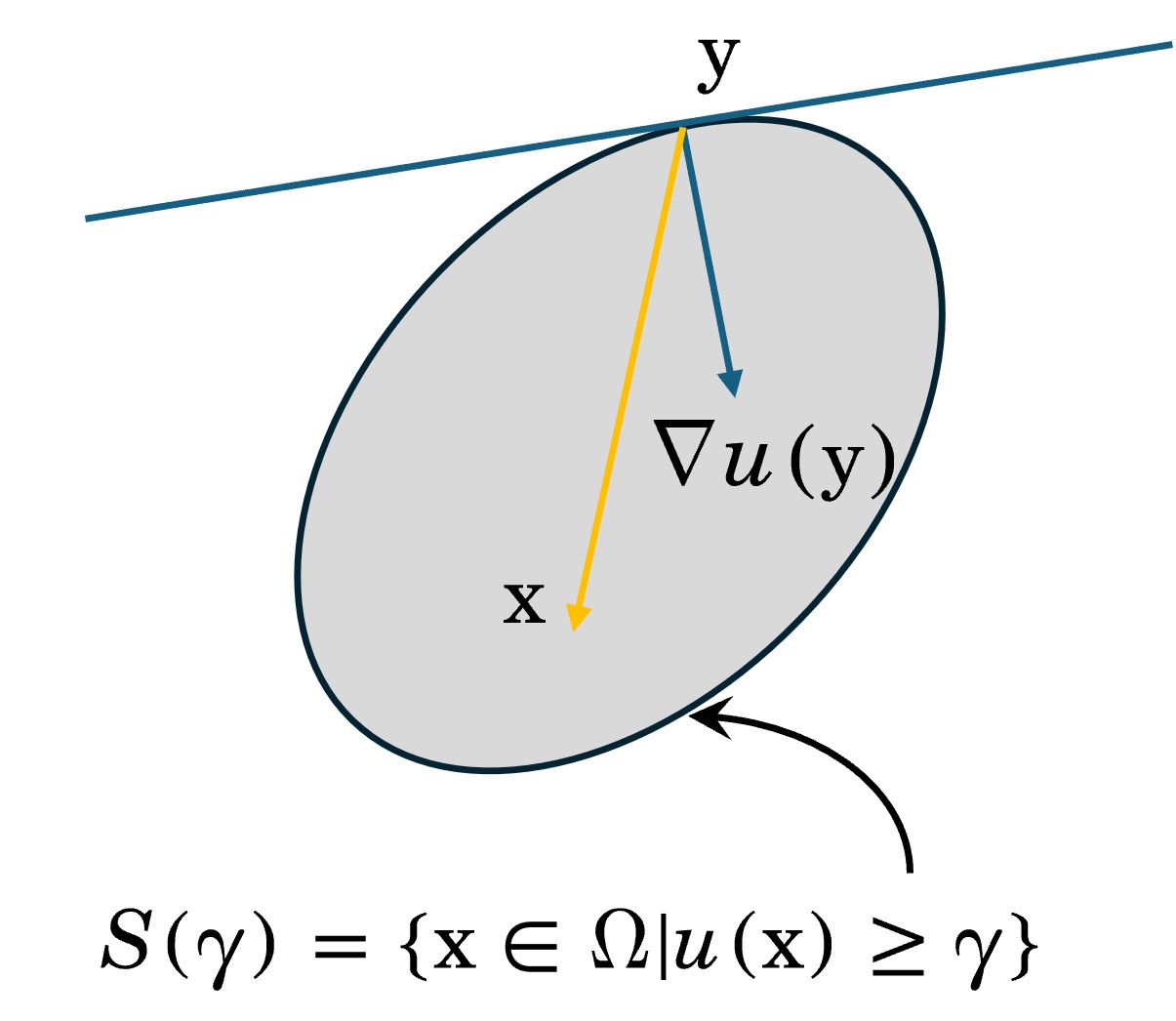}
    \caption{Illustration of the first-order condition. If $u(\vox)\ge u(\voy)$, then vectors $\vox- \voy$ and $\nabla u (\voy)$ must form an acute angle.}
    \label{fig:l1}
\end{figure}

\noindent\textbf{First-order quasi-concavity condition:}
    $u\in C^1$ is quasi-concave $\Longleftrightarrow$ If $u(\vox) \ge u(\voy)$, then $\nabla u(\voy)^\top (\vox - \voy) \ge 0$.

    \begin{proof}
        ($\Rightarrow$) Suppose $u$ is quasi-concave. Then every super-level set 
    $S_\gamma = \{\vox\in\Omega| u(\vox)\ge \gamma\}$ 
    is convex. According to Lemma \ref{lem:1}, for any boundary point 
    $\mathbf{y}$ with $u(\mathbf{y}) = \gamma$, if $\nabla u(\voy) = 0$, the condition holds trivially. Otherwise, $\segu_\gamma$ is contained in the closed half space of $T_\voy$, i.e. 
    \begin{equation}
        \begin{aligned}
        &u(\vox) \ge \gamma = u(\voy) \Rightarrow \vox \in \segu_\gamma \\
        &\Rightarrow \vox\in H_\voy= \{\vox\in\Omega | \nabla u(\voy)^\top (\vox - \voy) \ge 0\}.
    \end{aligned}
    \end{equation}
    
    Since $\gamma$ is arbitrary, for any $\vox,\voy$ with $u(\vox) \ge u(\voy)$, we have 
    $\nabla u(\voy)^\top (\vox - \voy) \ge 0$.

    \medskip
    ($\Leftarrow$)  
    Conversely, fix any $\gamma$ and consider the super-level set $\segu_\gamma$.  
    Take arbitrary $\vox_0,\vox_1\in S_\gamma$ and define 
    \begin{align}
       \phi(t)=u((1-t)\vox_0+t\vox_1), \ t\in[0,1].
    \end{align}
    
 If $\phi$ attained a value strictly below $\min\{\phi(0),\phi(1)\}$ at an interior $t_0\in(0,1)$,  w.l.o.g, let $\phi(1)\ge\phi(0)>\phi(t_0)$. According to continuity and Intermediate Value Theorem, there exists an interval  $[t_0, t_1] \subset[0,1]$ such that  $\phi(t_0) < \phi(t_1) = \phi(0)$, and $\phi(t) \le \phi(0)$ when $t\in [t_0, t_1]$. Such an interval exists by continuity, and can be found easily by extending $(0,\phi(0))$ along $t$ axis and check the intersections with $\phi(t)$. 
 
 Take $t\in (t_0,t_1)$, and  let $\vox_t=(1-t)\vox_0+t\vox_1$.  By the hypothesis, since both $u(\vox_0),u(\vox_1) \ge u(\vox_t)$,
 \begin{align}
     \nabla u(\vox_t)^\top(\vox_i-\vox_t)\ge0, \ i = 0,1,
 \end{align}
 which gives
 \begin{equation}
     \begin{aligned}
         t \nabla u(\vox_t)^\top(\vox_0-\vox_1) & \ge0, \\
         (1-t) \nabla u(\vox_t)^\top(\vox_0-\vox_1) & \le0,
     \end{aligned}
 \end{equation}
collectively,
 \begin{align}
     \nabla u(\vox_t)^\top(\vox_1-\vox_0)=0, \ \forall t\in(t_0,t_1).
 \end{align}

 According to Mean Value Theorem, there exists $t^* \in (t_0,t_1)$, such that 
 \begin{equation}
      \begin{aligned}
     0 &< \phi(t_1) - \phi(t_0) = \nabla u(\vox_{t^*})^\top(\vox_{t_1}-\vox_{t_0}) \\
     & = (t_1 - t_0) \nabla u(\vox_{t^*})^\top (\vox_1-\vox_0)=0,
 \end{aligned}
 \end{equation}
 which is contradictory. Therefore $\phi(t)\ge\min\{\phi(0),\phi(1)\}$ for all $t$, so the segment $[\vox_0,\vox_1]\subset S_\gamma$. Hence $S_\gamma$ is convex. Since $\gamma$ was arbitrary, $u$ is quasi-concave.
    \end{proof}

\section{Proof of Theorem~\ref{thm:3}}
\begin{figure}[htbp]
    \centering
    \includegraphics[width=0.7\linewidth]{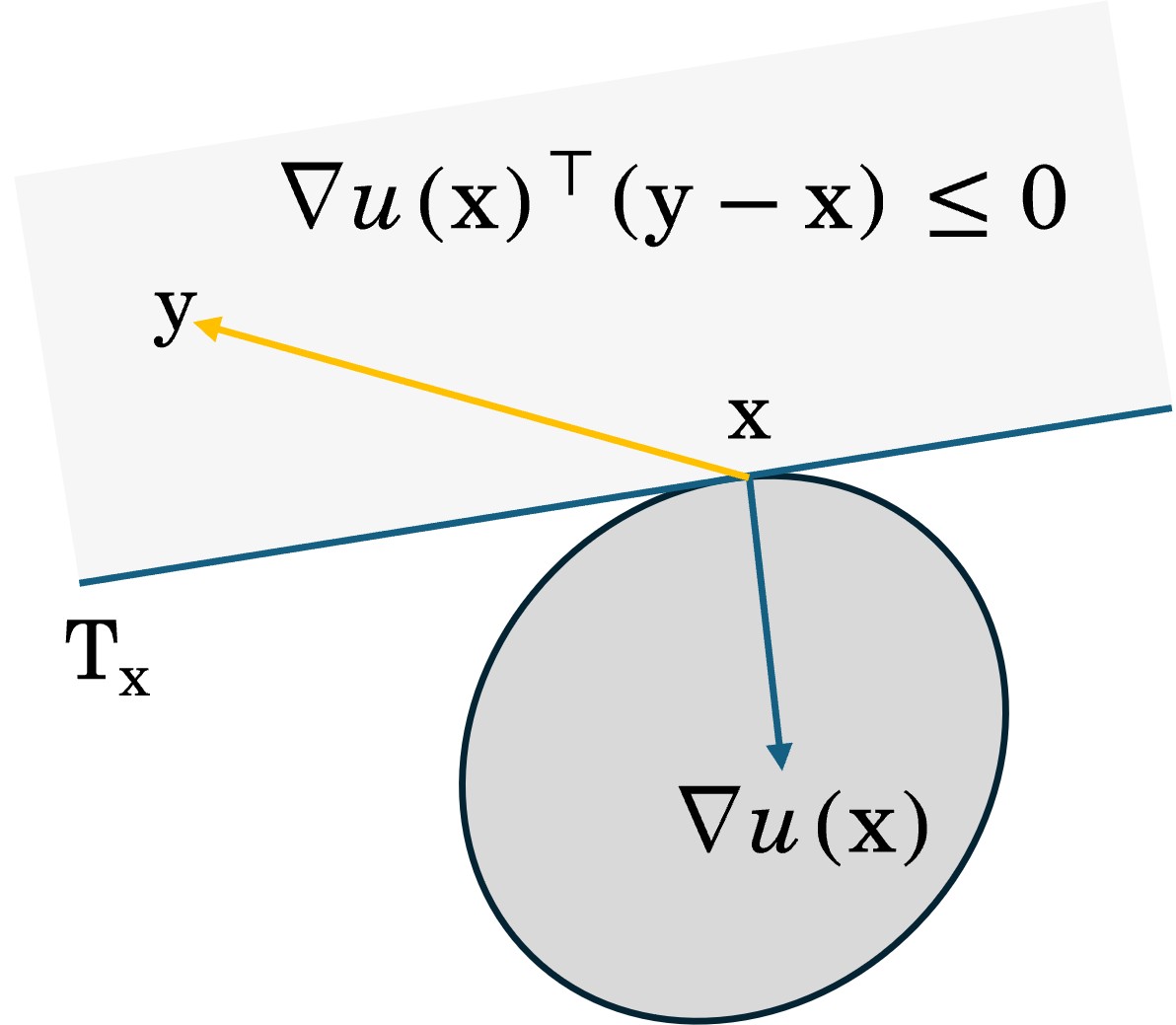}
    \caption{Illustration of the second-order condition. For fixed $\vox$, if $u$ is quasi-concave, then on the half space $\nabla u(\vox)^\top(\voy-\vox)\le 0$, maximum value of $u$ is taken at $\vox$.}
    \label{fig:l2}
\end{figure}

\noindent\textbf{Second-order quasi-concavity sufficient condition:}
Let $u \in C^2$, if the Hessian $\nabla^2 u(\vox) \prec 0$ (strict negative definite) on tangent space $T_\vox$ for all $\vox\in\Omega$ such that $\nabla u(\vox)\neq 0$, then $u \in C^2$ is quasi-concave.
\begin{proof}
    Fix any $\gamma$ and consider the super-level set $\segu_\gamma$.  
    Take arbitrary $\vox_0,\vox_1\in S_\gamma$ and define 
    \begin{equation}
        \begin{aligned}
       \phi(t) &=u((1-t)\vox_0+t\vox_1), \ t\in[0,1], \\
        \vox_t &= (1-t)\vox_0+t\vox_1.
    \end{aligned}
    \end{equation}
  
 If $\phi$ attained a value strictly below $\min\{\phi(0),\phi(1)\}$ at an interior $t_0\in(0,1)$, without loss of generality, let $\phi(1)\ge\phi(0)>\phi(t_0)$. Then according to continuity and Extreme Value Theorem, $\phi$ attains a strict interior minimum at some $t^*\in(0,1)$ such that $\phi(1)\ge\phi(0)>\phi(t^*)$. At this $t^*$, we have $\phi^\prime(t^*) = 0, \phi^{\prime\prime}(t^*) \ge 0$. Further, $\phi^\prime(t^*) = 0$ implies 
 \begin{align}
     \phi^\prime(t^*) = \nabla u(\vox_{t^*})^\top (\vox_1 - \vox_0) = 0,
 \end{align}
 which means $\vox_1 - \vox_0 \in T_{\vox_{t^*}}$. According to the negative definite hypothesis,
 \begin{align}
     (\vox_1 - \vox_0)^\top \nabla^2 u(\vox_{t^*}) (\vox_1 - \vox_0) < 0. 
 \end{align}
 
 However, this contradicts with 
 \begin{align}
     \phi^{\prime\prime}(t^*) = (\vox_1 - \vox_0)^\top \nabla^2 u(\vox_{t^*}) (\vox_1 - \vox_0) \ge 0.
 \end{align}

 Therefore, for all $t$, $\phi(t) \ge \min\{\phi(0), \phi(1)\}$, so the segment $[\vox_0,\vox_1]\subset \segu_\gamma$. Hence $S_\gamma$ is convex. Since $\gamma$ was arbitrary, $u$ is quasi-concave.
\end{proof}

\section{Explicit form of convex loss gradient}

In the proposed convex gradient projection module, the term $\nabla_u \mathcal{L}_{convex}(u)$ can be either computed by auto-gradient, or by using an explicit form, and both approaches are efficient. Let $D_x, D_y, D_{xx}, D_{yy}, D_{xy}$ be fixed linear convolution operators with adjoints $D_x^{\!\top},D_y^{\!\top},D_{xx}^{\!\top},
D_{yy}^{\!\top},D_{xy}^{\!\top}$
realizing first and second order finite differences. Specifically, 
\begin{equation}
    \begin{aligned}
  D_x &= 
\begin{bmatrix}
0 & 0 & 0 \\
0 & -1 & 0 \\
0 & 1 & 0
\end{bmatrix}, 
\ 
D_{xx} = D_x^\top D_x = \begin{bmatrix}
0 & 1 & 0 \\
0 & -2 & 0 \\
0 & 1 & 0
\end{bmatrix}, \\
D_y &= 
\begin{bmatrix}
0 & 0 & 0 \\
0 & -1 & 1 \\
0 & 0 & 0
\end{bmatrix}, 
D_{yy} = D_y^\top D_y = \begin{bmatrix}
0 & 0 & 0 \\
1 & -2 & 1 \\
0 & 0 & 0
\end{bmatrix}, \\
D_{xy} &= D_{yx} = \frac{1}{2}(D_y D_x + D_x D_y) = \frac{1}{2}\begin{bmatrix}
0 & 0 & 0 \\
0 & -1 & 1 \\
0 & 1 & -1
\end{bmatrix}.
\end{aligned}
\end{equation}

For any function $u:\Omega\to[0,1]$ define
\begin{equation}
    \begin{aligned}
        u_x&=D_xu,\quad u_y=D_yu, \\u_{xx}&=D_{xx}u,\quad u_{yy}=D_{yy}u,\quad u_{xy}=D_{xy}u.
    \end{aligned}
\end{equation}

\paragraph{Explicit gradient of $\mathcal{L}_{2nd}.$} The loss is
\begin{align}
    \mathcal{L}_{2nd}(u)=\frac{1}{|\Omega|}\sum_{\mathbf{x}\in\Omega}\|\nabla u(\mathbf{x})\|R(\mathbf{x}),
\end{align}
where  $R = \mathrm{ReLU}(Q_2+\delta)$, and $H = \mathbf{1}[Q_2+\delta>0]$ (indicator function). Therefore,
\begin{equation}
    \begin{aligned}
    \nabla_u\mathcal{L}_{2nd}(u)&=\frac{1}{|\Omega|}\sum_{\mathbf{x}\in\Omega}[\nabla_u\|\nabla u(\mathbf{x})\|R(\mathbf{x}) \\&+ \|\nabla u(\mathbf{x})\|\nabla_uR(\mathbf{x})].
\end{aligned}
\end{equation}

With $Q_2 = u_x^2u_{yy}-2u_xu_yu_{xy}+u_y^2u_{xx}$,
the pointwise partial derivatives are
\begin{equation}
    \begin{aligned}
        \frac{\partial Q_2}{\partial u_x}&=2u_xu_{yy}-2u_yu_{xy}, \
\frac{\partial Q_2}{\partial u_y}=2u_yu_{xx}-2u_xu_{xy}, \\
\frac{\partial Q_2}{\partial u_{xx}}&=u_y^2, \
\frac{\partial Q_2}{\partial u_{yy}}=u_x^2, \
\frac{\partial Q_2}{\partial u_{xy}}=-2u_xu_y .
\end{aligned}
\end{equation}

Since $u_x=D_xu$ etc.\ are linear in $u$, the chain rule with operator adjoints
gives the discrete gradient
\begin{equation}
    \begin{aligned}
        &\nabla_u Q_2(u) = D_{xx}^{\top}(u_y^2)
+ D_{yy}^{\top}(u_x^2)
+ D_{xy}^{\top}(-2u_xu_y)
\\ & + D_x^{\top}(2u_xu_{yy}-2u_yu_{xy}) + D_y^{\top}(2u_yu_{xx}-2u_xu_{xy}).
\end{aligned}
\end{equation}

For numerical stability use the smoothed magnitude
\begin{equation}
    \|\nabla u\| = \sqrt{u_x^2+u_y^2+\varepsilon}\quad (\varepsilon>0),
\end{equation}
then
\begin{equation}
    \frac{\partial \|\nabla u\|}{\partial u_x}=\frac{u_x}{\|\nabla u\|},
\quad
\frac{\partial \|\nabla u\|}{\partial u_y}=\frac{u_y}{\|\nabla u\|},
\end{equation}
and the chain rule yields
\begin{equation}
\nabla_u \|\nabla u\| =
D_x^{\!\top}\!\left(\frac{u_x}{\|\nabla u\|}\right)
\;+\;
D_y^{\!\top}\!\left(\frac{u_y}{\|\nabla u\|}\right).    
\end{equation}

Substituting above with $\nabla_u R = H\nabla_u Q_2$ gives
\begin{equation}
\begin{aligned}
&\nabla_u \mathcal{L}_{2\mathrm{nd}}(u)
=\frac{1}{|\Omega|}\Big[
\underbrace{D_x^{\!\top}\!\Big(R\,\frac{u_x}{\|\nabla u\|}\Big) + D_y^{\!\top}\!\Big(R\,\frac{u_y}{\|\nabla u\|}\Big)}_{\text{through }g=\|\nabla u\|_\varepsilon}
\\
&\quad+\underbrace{D_x^{\!\top}\!\Big(\|\nabla u\|H\,(2u_xu_{yy}-2u_yu_{xy})\Big)}_{\text{via } \partial Q_2/\partial u_x}
\\
&\quad+\underbrace{D_y^{\!\top}\!\Big(\|\nabla u\|H\,(2u_yu_{xx}-2u_xu_{xy})\Big)}_{\text{via }\ \partial Q_2/\partial u_y}
\\
&\quad+\underbrace{D_{xx}^{\!\top}\!\big(\|\nabla u\|H\,u_y^2\big)
+ D_{yy}^{\!\top}\!\big(\|\nabla u\|H\,u_x^2\big)}_{\text{via } \partial Q_2/\partial u_{xx},\partial Q_2/\partial u_{yy}}
\Big]
\\
&\quad+\underbrace{D_{xy}^{\!\top}\!\big(\|\nabla u\|H\,(-2u_xu_y)\big)}_{\text{via }\partial Q_2/\partial u_{xy}}
\Big].
\end{aligned}    
\end{equation}

With the elementwise sigmoid $u =\sigma(o)=1/(1+e^{-o})$ one has
$\tfrac{\partial u}{\partial o}=u\,(1-u)$, hence
\[
\nabla_o \mathcal{L}_{2\mathrm{nd}}(u(o))
\;=\;
\big(\nabla_u \mathcal{L}_{2\mathrm{nd}}(u)\big)\;\odot\; u\,(1-u).
\]

\paragraph{Explicit gradient of $\mathcal{L}_{1st}.$}
Let $D_x, D_y$ be fixed linear convolution operators and write
\begin{equation}
\nabla u(\voy)=\big(D_x u(\voy),\,D_y u(\voy)\big)^\top.    
\end{equation}

For a temperature $\varepsilon>0$ define the smoothed sigmoid
\begin{equation}
\sigma_\varepsilon(t)\;=\;\frac{1}{1+e^{-t/\varepsilon}}, \
\sigma_\varepsilon'(t)\;=\;\frac{1}{\varepsilon}\,\sigma_\varepsilon(t)\bigl(1-\sigma_\varepsilon(t)\bigr).    
\end{equation}

Given a neighborhood $\mathcal{N}_{\voy}\subset\Omega$ around each $\voy$, set
for every ordered pair $(\vox,\voy)$ with $\vox\in\mathcal{N}_{\voy}$:
\begin{equation}
\begin{aligned}
&S(\vox,\voy)\;=\;\sigma_\varepsilon\big(u(\vox)-u(\voy)\big),\quad
\mathbf{v}_{\vox,\voy}\;=\;\vox-\voy, \\
&R(\vox,\voy)=H(\vox,\voy)\,\bigl(-\nabla u(\voy)^\top \mathbf{v}_{\vox,\voy}\bigr),
\end{aligned}
\end{equation}
where $H(\vox,\voy)=\mathbf{1}[\nabla u(\voy)^\top \mathbf{v}_{\vox,\voy}<0]$. The loss reads
\[
\mathcal{L}_{1\mathrm{st}}(u)
=\frac{1}{|\Omega|}\sum_{\voy\in\Omega}\;\sum_{\vox\in\mathcal{N}_{\voy}}
S(\vox,\voy)\,R(\vox,\voy).
\]

For any pixel $p\in\Omega$, the partial derivative $\big[\nabla_u \mathcal{L}_{1st}(u)\big](p)$
collects all terms in which $u(p)$ appears. There are three kinds of appearances.

\noindent\textbf{(A) $u(p)$ as a ``source'' value $u(\vox)$ in $S(\vox,\voy)$.}
Here $u(\vox)$ contributes when $\vox=p$ and $p\in\mathcal{N}_{\voy}$:
\begin{equation}
\frac{\partial S(\vox,\voy)}{\partial u(p)}=
\begin{cases}
\sigma_\varepsilon'\!\big(u(p)-u(\voy)\big), & \vox=p,\\[2pt]
0, & \vox\neq p.
\end{cases}    
\end{equation}

Thus the contribution is
\begin{equation}
\sum_{\voy:\;p\in\mathcal{N}_{\voy}}\sigma_\varepsilon'\!\big(u(p)-u(\voy)\big)\,R(p,\voy). 
\end{equation}

\noindent\textbf{(B) $u(p)$ as an ``anchor'' value $u(\voy)$ in $S(\vox,\voy)$.}
Here $u(\voy)$ contributes when $\voy=p$ and $\vox\in\mathcal{N}_{p}$:
\begin{equation}
\frac{\partial S(\vox,\voy)}{\partial u(p)}=
\begin{cases}
-\sigma_\varepsilon'\!\big(u(\vox)-u(p)\big), & \voy=p,\\[2pt]
0, & \voy\neq p.
\end{cases}    
\end{equation}

Thus the contribution is
\begin{equation}
-\sum_{\vox\in\mathcal{N}_{p}}\sigma_\varepsilon'\!\big(u(\vox)-u(p)\big)\,R(\vox,p).    
\end{equation}

\noindent\textbf{(C) $u(p)$ through the anchor gradient $\nabla u(\voy)$ inside $R(\vox,\voy)$.}
$R(\vox,\voy)$ depends on $u$ through the anchor gradient $\nabla u(\voy)$. Collecting all such contributions over $\voy$, the resulting gradient can be written compactly using the adjoint operators $D^\top_x,D^\top_y$.

Write $v_x(\vox,\voy)=(\vox-\voy)_x$, $v_y(\vox,\voy)=(\vox-\voy)_y$. Then
\begin{equation}
\begin{aligned}
R(\vox,\voy) =-H(\vox,\voy)\Bigl(D_xu(\voy)\,v_x(\vox,\voy)+D_yu(\voy)\,v_y(\vox,\voy)\Bigr).
\end{aligned}
\end{equation}

Introduce two auxiliary fields $C_x,C_y:\Omega\to\mathbb{R}$,
\begin{equation}
\begin{aligned}
C_x(p)= -\!\sum_{\vox\in\mathcal{N}_{p}} S(\vox,p)\,H(\vox,p)\,v_x(\vox,p),\\
C_y(p)= -\!\sum_{\vox\in\mathcal{N}_{p}} S(\vox,p)\,H(\vox,p)\,v_y(\vox,p),
\end{aligned}
\end{equation}

Therefore, the part of the objective depending on $R$ can be rewritten as
\begin{equation}
\begin{aligned}
 &\sum_{\voy\in\Omega}\sum_{\vox\in N_{\voy}} S(\vox,\voy)R(\vox,\voy)
\\
&=
\sum_{\voy\in\Omega} C_x(\voy)\,D_xu(\voy)
+
\sum_{\voy\in\Omega} C_y(\voy)\,D_yu(\voy).
\end{aligned}
\end{equation}

To differentiate it, note that both $D_x,D_y$ are linear operators, and we use the following standard identity: for any linear operator
$A$ and any coefficient field $c$,
\begin{equation}
\begin{aligned}
\sum_{\voy} c(\voy)\,(Au)(\voy)
=
c^\top Au
=
(A^\top c)^\top u,
\end{aligned}
\end{equation}

hence
\begin{equation}
\begin{aligned}
\nabla_u \left(\sum_{\voy} c(\voy)\,(Au)(\voy)\right)=A^\top c.    
\end{aligned}
\end{equation}

Applying this identity to the two terms gives
\begin{equation}
\begin{aligned}
&\left(
\nabla_u
\sum_{\voy\in\Omega}\sum_{\vox\in N_{\voy}}
S(\vox,\voy)R(\vox,\voy)
\right)_{\mathrm{via}\ R}
\\
&=
D_x^\top C_x + D_y^\top C_y.    
\end{aligned}
\end{equation}

Summing (A)+(B)+(C),
\begin{equation}
\begin{aligned}
\big[\nabla_u \mathcal{L}_{1\mathrm{st}}(u)\big](p)
&= \frac{1}{|\Omega|}
\Big[\underbrace{\sum_{\voy:\;p\in\mathcal{N}_{\voy}}
\sigma_\varepsilon'\!\big(u(p)-u(\voy)\big)\,R(p,\voy)}_{\text{via } \partial S/\partial u}
 \\
&-\underbrace{\sum_{\vox\in\mathcal{N}_{p}}
\sigma_\varepsilon'\!\big(u(\vox)-u(p)\big)\,R(\vox,p)}_{\text{via } \partial S/\partial u}
\\[2pt]
&+\underbrace{\big[D_x^{\!\top}C_x + D_y^{\!\top}C_y\big](p)}_{\text{via } \partial R/\partial u}
\Big],
\end{aligned}    
\end{equation}

If $u=\sigma(o)$ is the elementwise sigmoid of logits $o$,
then $\nabla_o\mathcal{L}_{1\mathrm{st}}(u(o))=\big(\nabla_u\mathcal{L}_{1\mathrm{st}}(u)\big)\odot u(1-u)$ elementwise.

\clearpage
\section{Qualitative Visualization on  ACDC Dataset.}
\begin{center}
\vspace{3em}
\includegraphics[width=1\textwidth]{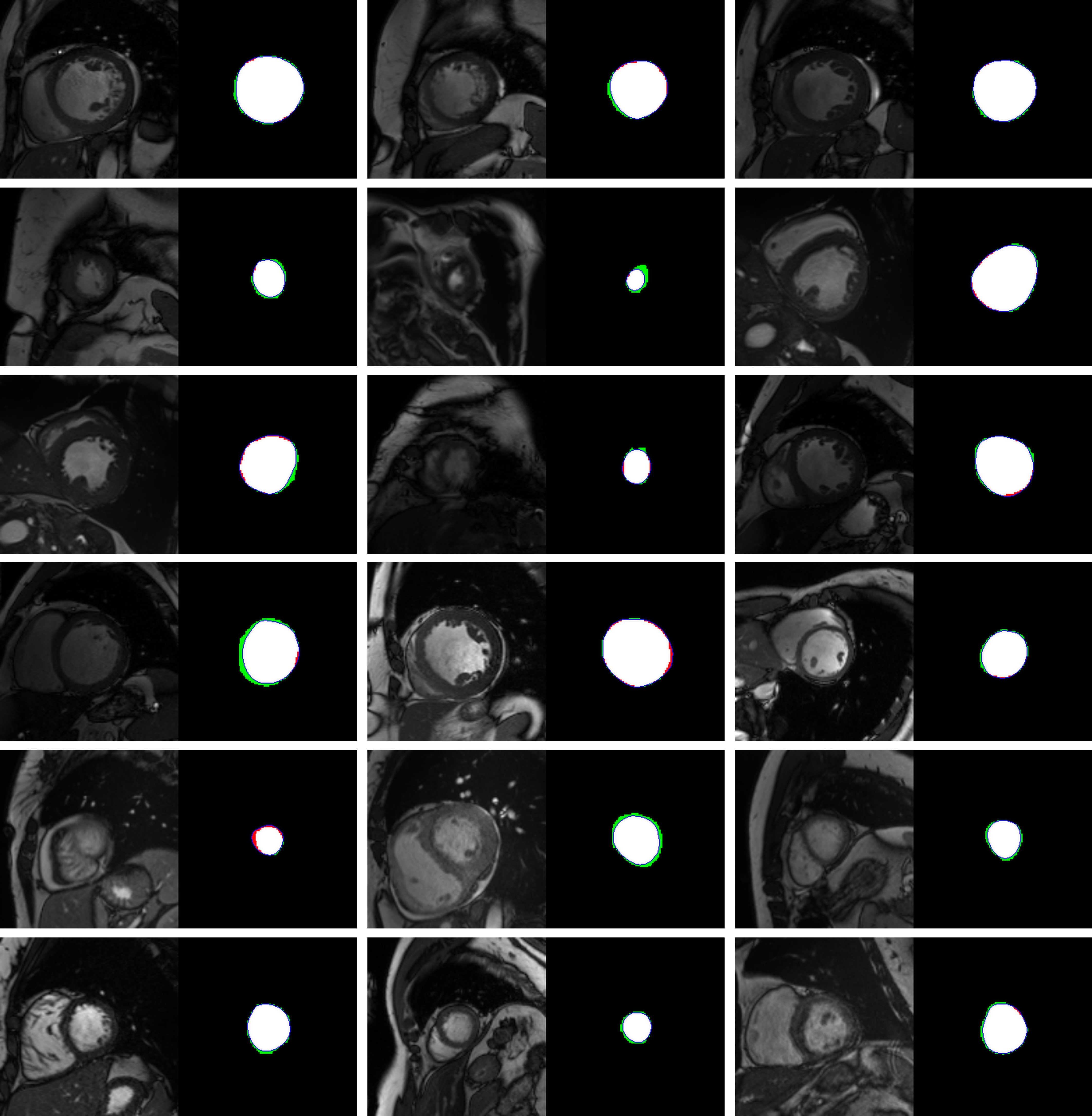}
\end{center}

\clearpage
\section{Qualitative Visualization on CASIA Dataset.}
\begin{center}
\vspace{3em}
\includegraphics[width=1\textwidth]{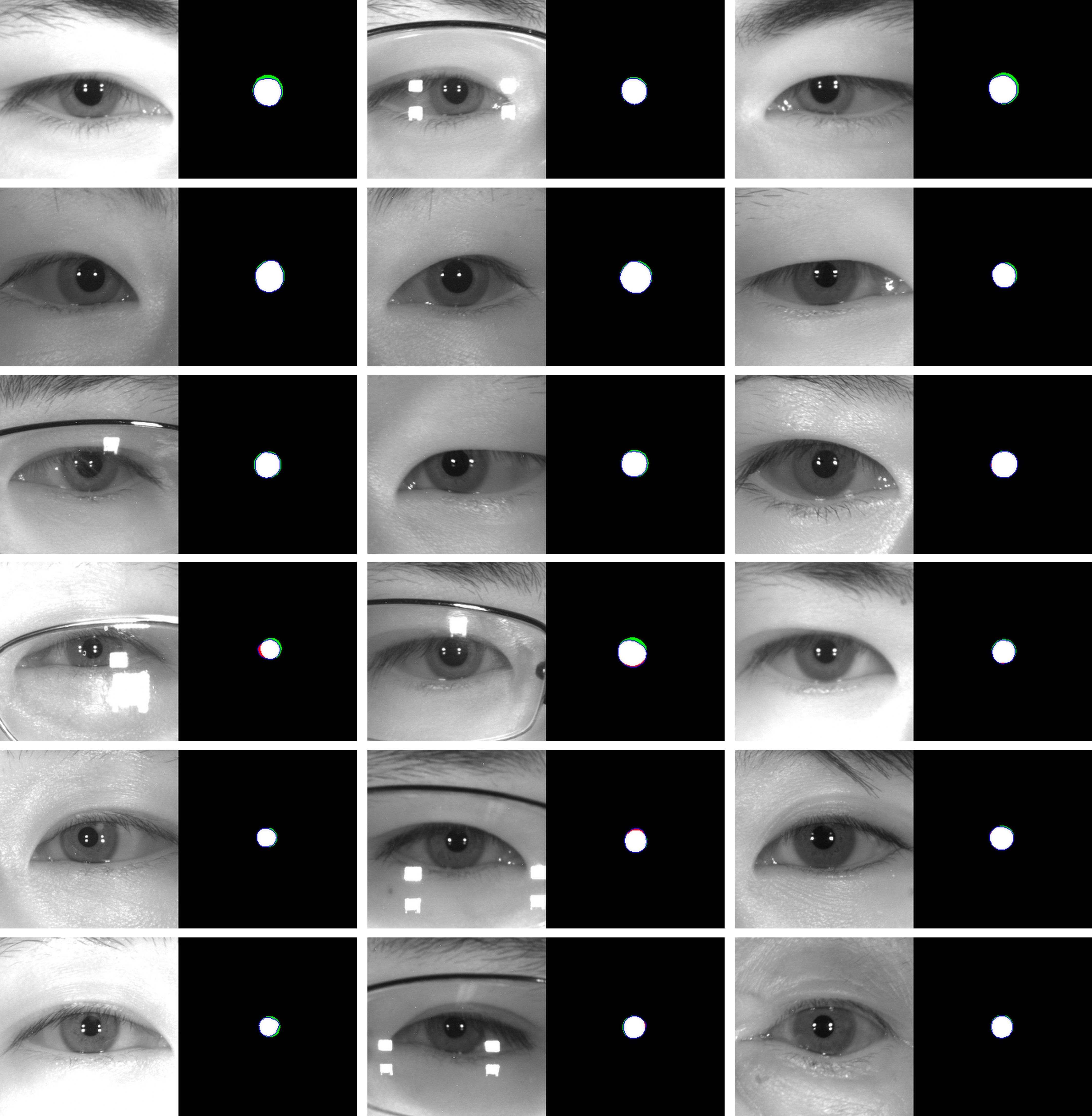}
\end{center}

\clearpage
\section{Qualitative Visualization on REFUGE Dataset.}
\begin{center}
\vspace{3em}
\includegraphics[width=1\textwidth]{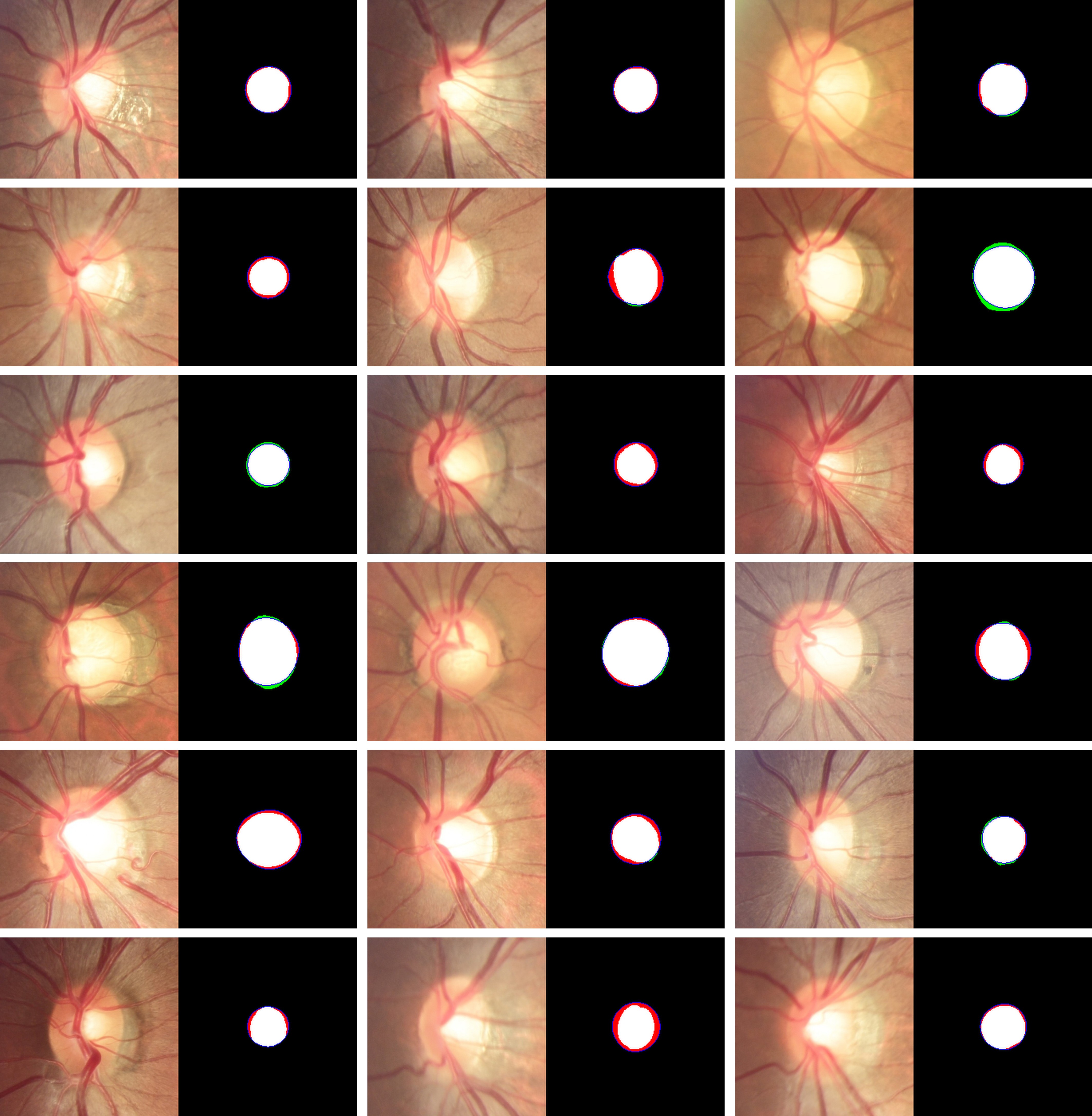}
\end{center}

\clearpage
\section{Qualitative Visualization on RIM-ONE-r3 Dataset.}
\begin{center}
\vspace{3em}
\includegraphics[width=1\textwidth]{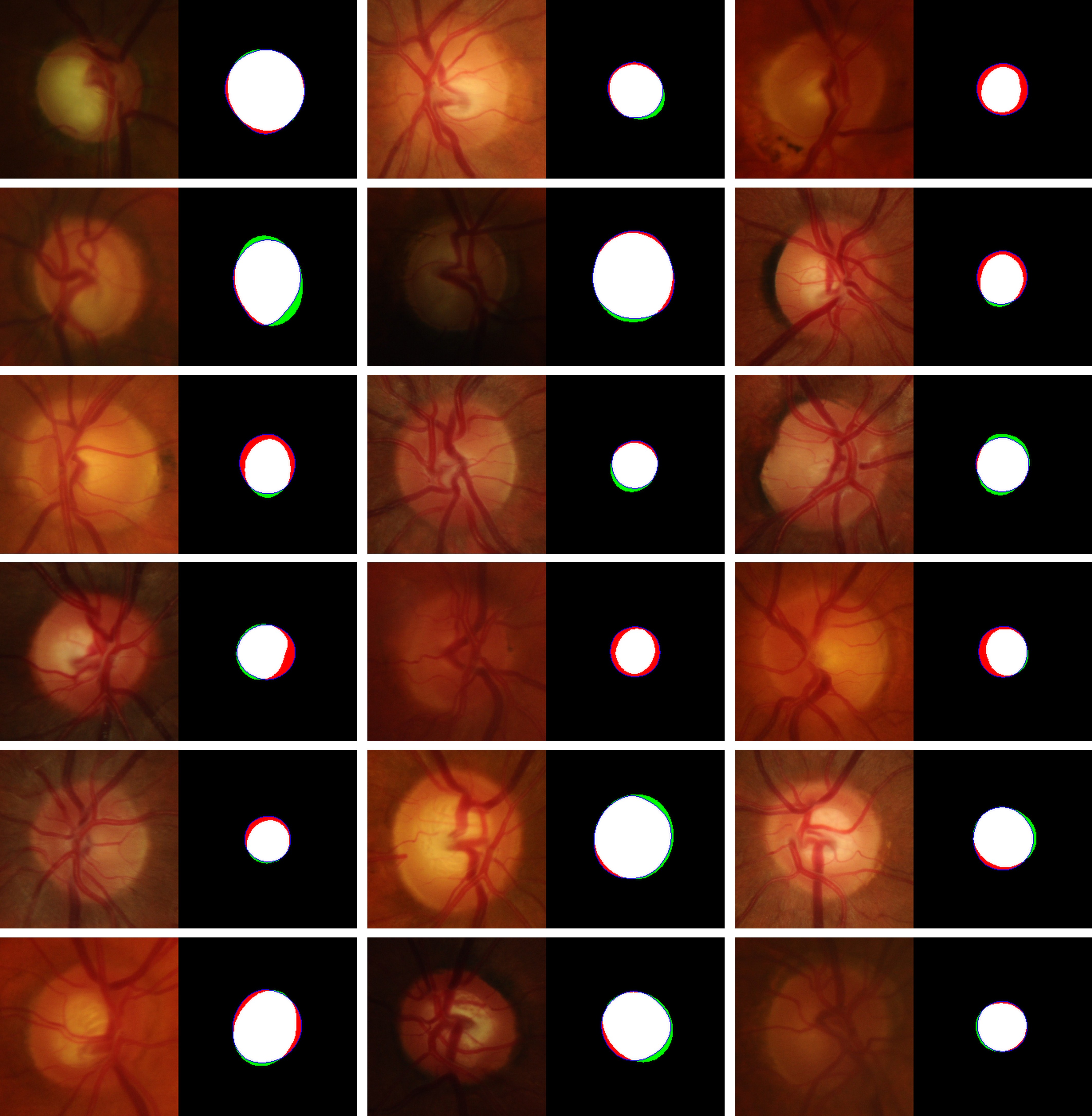}
\end{center}

\clearpage
\section{Ten Trials Results for the Ablation Study}
\begin{center}
\resizebox{1\textwidth}{!}{
\begin{tabular}{c c c c c c c c c c c c c c}
\toprule
\textbf{Trial}  & \textbf{Prior} &  \textbf{Backbone} &\textbf{Dice} & \textbf{IoU} & \textbf{HD} &  \textbf{Backbone} &\textbf{Dice} & \textbf{IoU} & \textbf{HD} &  \textbf{Backbone} &\textbf{Dice} & \textbf{IoU} & \textbf{HD} \\
\midrule
\multirow{4}{*}{0} &  -  & \multirow{4}{*}{U-Net} & 83.02 & 70.97 & 12.56 & \multirow{4}{*}{Swin-Unet} & 85.32 & 74.40 & 7.530 & \multirow{4}{*}{DeepLabV3p} & 74.52 & 59.39 & 12.18 \\
& $0th$ & & 82.93 & 70.83 & 12.52 & & 85.19 & 74.20 & 7.516 & & 74.49 & 59.35 & 12.17       \\
& $1st$ & & 83.59 & 71.81 & 11.67  & & 86.13 & 75.64 & 7.093 & & 76.66 & 62.15 & 11.09   \\
& $2nd$ & & 84.78 & 73.58 & 9.837 & & 88.53 & 79.43 & 5.969 & & 82.12 & 69.67 & 8.527    \\
\midrule
\multirow{4}{*}{1} &  -  & \multirow{4}{*}{U-Net} & 84.28 & 72.83 & 11.04 & \multirow{4}{*}{Swin-Unet} & 86.65 & 76.44 & 6.965 & \multirow{4}{*}{DeepLabV3p} & 77.21 & 62.88 & 11.04 \\
& $0th$ & & 84.22 & 72.74 & 10.99 & & 86.56 & 76.31 & 6.942 & & 77.16 & 62.81 & 11.04       \\
& $1st$ & & 84.88 & 73.74 & 10.31  & & 87.23 & 77.35 & 6.464 & & 79.02 & 65.31 & 10.10   \\
& $2nd$ & & 86.30 & 75.90 & 8.764 & & 88.74 & 79.75 & 5.665 & & 82.93 & 70.84 & 8.244    \\
\midrule
\multirow{4}{*}{2} &  -  & \multirow{4}{*}{U-Net} & 83.66 & 71.91 & 11.99 & \multirow{4}{*}{Swin-Unet} & 88.72 & 79.73 & 6.137 & \multirow{4}{*}{DeepLabV3p} & 75.97 & 61.25 & 11.42 \\
& $0th$ & & 83.31 & 71.39 & 11.86 & & 88.69 & 79.67 & 6.117 & & 75.95 & 61.23 & 11.42       \\
& $1st$ & & 83.83 & 72.16 & 11.02  & & 89.31 & 80.69 & 5.783 & & 77.96 & 63.88 & 10.42   \\
& $2nd$ & & 84.85 & 73.69 & 9.687 & & 90.73 & 83.03 & 5.043 & & 82.80 & 70.65 & 8.183    \\
\midrule
\multirow{4}{*}{3} &  -  & \multirow{4}{*}{U-Net} & 85.48 & 74.65 & 9.804 & \multirow{4}{*}{Swin-Unet} & 86.29 & 75.89 & 8.377 & \multirow{4}{*}{DeepLabV3p} & 77.54 & 63.32 & 10.74 \\
& $0th$ & & 85.50 & 74.68 & 9.640 & & 86.14 & 75.65 & 8.256 & & 77.53 & 63.30 & 10.73       \\
& $1st$ & & 85.97 & 75.39 & 8.973  & & 87.11 & 77.17 & 7.542 & & 79.89 & 66.51 & 9.618   \\
& $2nd$ & & 86.09 & 75.58 & 8.055 & & 88.79 & 79.84 & 7.082 & & 84.66 & 73.41 & 7.390    \\
\midrule
\multirow{4}{*}{4} &  -  & \multirow{4}{*}{U-Net} & 84.98 & 73.88 & 10.20 & \multirow{4}{*}{Swin-Unet} & 87.01 & 77.01 & 10.59 & \multirow{4}{*}{DeepLabV3p} & 77.20 & 62.87 & 11.26 \\
& $0th$ & & 84.93 & 73.80 & 10.16 & & 86.81 & 76.69 & 10.50 & & 77.19 & 62.85 & 11.25       \\
& $1st$ & & 85.56 & 74.76 & 9.380  & & 87.44 & 77.68 & 8.022 & & 79.17 & 65.52 & 10.31   \\
& $2nd$ & & 86.50 & 76.21 & 8.106 & & 87.95 & 78.49 & 7.349 & & 83.80 & 72.12 & 8.092    \\
\midrule
\multirow{4}{*}{5} &  -  & \multirow{4}{*}{U-Net} & 83.74 & 72.02 & 11.86 & \multirow{4}{*}{Swin-Unet} & 84.35 & 72.94 & 7.777 & \multirow{4}{*}{DeepLabV3p} & 78.40 & 64.48 & 10.93 \\
& $0th$ & & 83.57 & 71.78 & 11.78 & & 84.26 & 72.81 & 7.777 & & 78.36 & 64.41 & 10.93       \\
& $1st$ & & 84.25 & 72.79 & 11.02  & & 85.13 & 74.11 & 7.330 & & 80.38 & 67.20 & 9.616   \\
& $2nd$ & & 85.66 & 74.92 & 8.977 & & 87.51 & 77.80 & 6.243 & & 84.25 & 72.78 & 7.580    \\
\midrule
\multirow{4}{*}{6} &  -  & \multirow{4}{*}{U-Net} & 83.95 & 72.34 & 11.49 & \multirow{4}{*}{Swin-Unet} & 83.07 & 71.05 & 8.197 & \multirow{4}{*}{DeepLabV3p} & 76.80 & 62.34 & 12.20 \\
& $0th$ & & 83.68 & 71.94 & 11.40 & & 82.99 & 70.92 & 8.190 & & 76.81 & 62.35 & 12.20       \\
& $1st$ & & 84.17 & 72.67 & 10.68  & & 83.80 & 72.12 & 7.773 & & 79.62 & 66.14 & 10.18   \\
& $2nd$ & & 85.03 & 73.96 & 9.489 & & 85.90 & 75.28 & 6.847 & & 85.43 & 74.57 & 7.370    \\
\midrule
\multirow{4}{*}{7} &  -  & \multirow{4}{*}{U-Net} & 83.68 & 71.94 & 11.69 & \multirow{4}{*}{Swin-Unet} & 84.58 & 73.28 & 8.912 & \multirow{4}{*}{DeepLabV3p} & 77.72 & 63.55 & 10.72 \\
& $0th$ & & 83.44 & 71.59 & 11.58 & & 84.11 & 72.58 & 8.783 & & 77.69 & 63.52 & 10.71       \\
& $1st$ & & 84.01 & 72.43 & 10.91  & & 85.38 & 74.48 & 7.938 & & 79.99 & 66.66 & 9.525   \\
& $2nd$ & & 85.33 & 74.41 & 9.558 & & 87.64 & 77.99 & 6.789 & & 83.95 & 72.34 & 7.462    \\
\midrule
\multirow{4}{*}{8} &  -  & \multirow{4}{*}{U-Net} & 85.03 & 73.96 & 10.39 & \multirow{4}{*}{Swin-Unet} & 86.99 & 76.98 & 6.555 & \multirow{4}{*}{DeepLabV3p} & 78.96 & 65.24 & 10.39 \\
& $0th$ & & 84.91 & 73.77 & 10.30 & & 86.93 & 76.89 & 6.539 & & 78.96 & 65.23 & 10.38       \\
& $1st$ & & 85.52 & 74.70 & 9.608  & & 87.63 & 77.98 & 6.169 & & 80.39 & 67.21 & 9.661   \\
& $2nd$ & & 86.85 & 76.76 & 8.281 & & 89.27 & 80.62 & 5.431 & & 83.97 & 72.37 & 7.832    \\
\midrule
\multirow{4}{*}{9} &  -  & \multirow{4}{*}{U-Net} & 83.50 & 71.67 & 11.96 & \multirow{4}{*}{Swin-Unet} & 85.82 & 75.16 & 8.329 & \multirow{4}{*}{DeepLabV3p} & 77.73 & 63.57 & 11.19 \\
& $0th$ & & 83.27 & 71.33 & 11.77 & & 85.27 & 74.32 & 8.193 & & 77.69 & 63.51 & 11.18       \\
& $1st$ & & 83.83 & 72.16 & 11.09  & & 86.23 & 75.79 & 7.387 & & 79.54 & 66.03 & 10.22   \\
& $2nd$ & & 85.03 & 73.96 & 9.832 & & 87.49 & 77.76 & 6.648 & & 83.83 & 72.16 & 8.148    \\
\bottomrule
\end{tabular}
}
\end{center}


\end{document}